\def\eqref#1{equation~\ref{#1}}
\def\1{\bm{1}}
\DeclareMathAlphabet{\mathsfit}{\encodingdefault}{\sfdefault}{m}{sl}
\SetMathAlphabet{\mathsfit}{bold}{\encodingdefault}{\sfdefault}{bx}{n}
\definecolor{rwth-red}{cmyk}{.15,1,1,0}\colorlet{rwth-lred}{rwth-red!50}\colorlet{rwth-llred}{rwth-red!25}
\definecolor{rwth-green}{cmyk}{.7,0,1,0}\colorlet{rwth-lgreen}{rwth-green!50}\colorlet{rwth-llgreen}{rwth-green!25}
\title{Adaptive Width Neural Networks}
\author{%
  Federico Errica \\
  NEC Laboratories Europe \\
  federico.errica@neclab.eu \\
   \And
  Henrik Christiansen \\
  NEC Laboratories Europe \\
   \And
  Viktor Zaverkin \\
  NEC Laboratories Europe \\
   \And
  Mathias Niepert \\
  University of Stuttgart \\
  NEC Laboratories Europe \\
   \And
  Francesco Alesiani \\
  NEC Laboratories Europe \\
}
\theoremstyle{plain}
\newtheorem{theorem}{Theorem}[section]
\newtheorem{proposition}[theorem]{Proposition}
\newtheorem{lemma}[theorem]{Lemma}
\theoremstyle{definition}
\newtheorem{definition}[theorem]{Definition}
\theoremstyle{remark}
\newcommand{\method}{\textsc{AWN}}
\begin{document}

\maketitle

\begin{abstract}
For almost 70 years, researchers have typically selected the width of neural networks’ layers either manually or through automated hyperparameter tuning methods such as grid search and, more recently, neural architecture search. This paper challenges the status quo by introducing an easy-to-use technique to learn an \textit{unbounded} width of a neural network's layer \textit{during training}. The method jointly optimizes the width and the parameters of each layer via standard backpropagation. We apply the technique to a broad range of data domains such as tables, images, text, sequences, and graphs, showing how the width adapts to the task's difficulty. A by product of our width learning approach is the easy truncation of the trained network at virtually zero cost, achieving a smooth trade-off between performance and compute resources. Alternatively, one can dynamically compress the network until performances do not degrade. 
In light of recent foundation models trained on large datasets, requiring billions of parameters and where hyper-parameter tuning is unfeasible due to huge training costs, our approach introduces a viable alternative for width learning. 
\end{abstract}

\section{Introduction}
\label{sec:introduction}

Since the construction of the Mark I Perceptron machine \citep{rosenblatt_perceptron_1958} the effective training of neural networks has remained an open research problem of great academic and practical value. The Mark I solved image recognition tasks with a layer of 512 \textit{fixed} ``association units'' that in modern language are the hidden units of a Multi-Layer Perceptron (MLP). MLPs possess universal approximation capabilities when assuming \textit{arbitrary width} \citep{cybenko_approximation_1989} and sigmoidal activations, and their convergence to good solutions was studied, for instance, in \citet{rumelhart_learning_1986} where the backpropagation algorithm was introduced as ``simple, easy to implement on parallel hardware'', and improvable by other techniques such as momentum that preserve locality of weight updates.

Yet, after almost 70 years of progress \citep{lecun_deep_2015}, the vast majority of neural networks, be they shallow or deep, still rely on a fixed choice of the number of neurons in their hidden layers. The width is typically treated as one of the many hyper-parameters that have to be carefully tuned whenever we approach a new task \citep{wolpert_lack_1996}. The tuning process has many names, such as model selection, hyper-parameter tuning, and cross-validation, and it is associated with non-negligible costs: Different architectural configurations are trained until one that performs best on a validation set is selected \citep{mitchell_machine_1997}. The configurations' space grows exponentially in the number of layers, so practitioners often resort to shortcuts such as picking a specific number of hidden units for \textit{all} layers and trying a few values, which greatly reduces the search space together with the chances of selecting a better architecture for the task. Other techniques to tune hyper-parameters include constructive approaches \citep{fahlman_cascade_1989,wu_firefly_2020}, which alternate parameter optimization and creation of new neurons, natural gradient-based heuristics that dynamically modify the network \citep{mitchell_self_2023}, bi-level optimization \citep{franceschi_bilevel_2018}, and neural architecture search \citep{white_neural_2023}, which often requires separate training runs for each configuration.

The hyper-parameters' space exploration problem is exacerbated by the steep increase in size of recent neural architectures for language \citep{brown_language_2020} and vision \citep{zhai_scaling_2022}, for example, where parameters are in the order of billions to accommodate for a huge dataset. Training these models requires an amount of time, compute power, and energy that currently makes it unfeasible for most institutions to perform a thorough model selection and find good width parameters; the commonly accepted compromise is to stick to previously successful hyper-parameter choices. This may also explain why network pruning \citep{blalock_state_2020,mishra_accelerating_2021}, distillation \citep{zhang_your_2019} and quantization \citep{mishra_accelerating_2021} techniques have recently been in the spotlight, as they trade-off hardware requirements and performance.

This work introduces a \textit{simple} and \textit{easy to use} technique to learn the width of each neural network's layer without imposing upper bounds (we refer to it as \textbf{unbounded width}). The width of each layer is \textit{dynamically} adjusted during backpropagation \citep{paszke_automatic_2017}, and it only requires a slight modification to the neural activations that does not alter the ability to parallelize computation. The technical strategy is to impose a soft ordering of hidden units by exploiting any monotonically decreasing function with unbounded support on natural numbers. With this, we do not need to fix a maximum number of neurons, which is typical of orthogonal approaches like supernetworks \citep{white_neural_2023}. As a by-product, we can achieve a straightforward trade-off between parametrization and performance by deleting the last rows/columns of weight matrices, i.e., removing the ``least important'' neurons from the computation. Finally, we break symmetries in the parametrization of neural networks: it is not possible anymore to obtain an equivalent neural network behavior by permuting the weight matrices, which reduces the ``jostling'' effect where symmetric parametrizations compete when training starts \citep{barber_bayesian_2012}.

We test our method on MLPs for tabular data, a Convolutional Neural Network (CNN) \citep{lecun_backpropagation_1989} for images, a Transformer \citep{vaswani_attention_2017} for text, a Recurrent Neural Network (RNN) for sequences, and a Deep Graph Network (DGN) \citep{micheli_neural_2009,scarselli_graph_2009} for graphs, showcasing a broad scope of applicability as MLPs are ubiquitous in modern architectures. Empirical results show, as may be intuitively expected, that the width adapts to the task's difficulty with performances comparable to fixed-width baseline.
We also encourage compression of the network at training time while preserving accuracy, as well as a post-hoc truncation inducing a controlled trade-off at zero additional cost. Ablations suggest that the learned width is not influenced by the starting width (under bounded activations) nor by the batch size, advocating for a potentially large reduction of the hyper-parameter configuration space.

\section{Related Work}
\label{sec:related-work}
Constructive methods dynamically learn the width of neural networks and are related in spirit to this work. The cascade correlation algorithm \citep{fahlman_cascade_1989} alternates standard training with the creation of a new hidden unit minimizing the neural network's residual error. Similarly, the firefly network descent \citep{wu_firefly_2020} grows the width and depth of a network every $N$ training epochs via gradient descent on a dedicated loss. \citet{yoon_lifelong_2018} propose an ad-hoc algorithm for lifelong learning that grows the network by splitting and duplicating units to learn new tasks. \citet{wu_splitting_2019} alternate training the network and then splitting existing neurons into offspring with equal weights. These works mostly focus on growing the neural network; \citet{mitchell_self_2023} propose natural gradient-based heuristics to grow/shrink layers and hidden units of MLPs and CNNs. The main difference from our work is that we grow and shrink the network by simply computing the gradient of the loss, without relying on human-defined heuristics. The unbounded depth network of \citet{nazaret_variational_2022}, from which we draw inspiration, learns the number of layers of neural networks. Compared to that work, we focus our attention to the number of neurons, modifying the internals of the architecture rather than instantiating a multi-output one. In particular: i) learning the width requires a different formulation of the evidence lower-bound; and ii) the importance distribution needs to be monotonically decreasing, which does not encode the right inductive bias for the depth of a neural network. We also mention Bayesian nonparametric approaches \citep{orbanz_bayesian_2010} that learn a potentially infinite number of clusters in an unsupervised fashion, as well as dynamic neural networks \citep{han2021dynamic} that condition the architecture on input properties. Finally, the work of \citet{caron_overparameterised_2025} investigates, in a similar spirit to this work, how to introduce decreasing rescaling of the pre-activations of neurons in the context of very wide networks; the authors provide interesting global convergence results in the Neural Tangent Kernel (NTK) regime.
\paragraph{Orthogonal Methods}
Neural Architecture Search (NAS) is an automated process that designs neural networks for a given task \citep{elsken_neural_2019,white_neural_2023} and has been applied to different contexts \citep{zoph_learning_2018,liu_darts_2019,so_evolved_2019}. Typically, neural network elements are added, removed, or modified based on validation performance \citep{elsken_efficient_2018,white_local_2021,white_neural_2023}, by means of reinforcement learning \citep{zoph_neural_2016}, evolutionary algorithms \citep{real_regularized_2019}, and gradient-based approaches  \citep{liu_darts_2019}. Typical NAS methods require enormous computational resources, sometimes reaching thousands of GPU days \citep{zoph_neural_2016}, due to the retraining of each new configuration. While recent advances on one-shot NAS models \citep{brock_smash_2018,pham_efficient_2018,bender_understanding_2018,berman_aows_2020,su_bcnet_2021,su_locally_2021} have drastically reduced the computational costs, they mostly focus on CNNs, assume a bounded search space, and do not learn the width. As such, NAS methods are complementary to our approach. Bi-level optimization algorithms have also been used for hyper-parameter tuning \citep{franceschi_bilevel_2018}, where hyper-parameters are the variables of the outer objective and the model parameters those of the inner objective. 
The solution sets of the inner problem are usually not available in closed form, which has been partly addressed by repeated application of (stochastic) gradient descent \citep{domke2012generic, maclaurin2015gradient, franceschi2017forward}. 
These methods are restricted to continuous hyper-parameters' optimization and cannot be applied to width optimization. \\
Finally, pruning \citep{blalock_state_2020} and distillation \citep{hinton_distilling_2015} are two methods that reduce the size of neural networks by trading-off performances; the former deletes neural connections \citep{mishra_accelerating_2021} or entire neurons \citep{valerio_dynamic_2022,dufort_maxwell_2024}, the latter trains a smaller network (student) to mimic a larger one (teacher) \citep{gou_knowledge_2021}. In particular, dynamic pruning techniques can compress the network at training time \citep{guo_dynamic_2016}, by applying hard or soft masks \citep{he2018soft}; for a comprehensive survey on pruning, please refer to \citep{he_structured_2023}. It is worth mentioning the pruning strategy of \citet{wolinski_asymmetrical_2020}, which promotes learning in some neurons and penalizes others by appropriate rescaling of a potentially (infinite-dimensional) input; this work shares conceptual similarities with the our approach, though it does not consider learning the rescaling factor as a proxy for the width. Compared to most pruning approaches, our work can delete connections \textit{and} reduce the model's memory, but also grow it indefinitely; compared to distillation, we do not necessarily need a new training to compress the network. These techniques can be easily combined with our approach, whose main goal is not compression but rather the automatic adaptation of a neural network's width.

\section{Adaptive Width Learning}
\label{sec:method}
We now introduce Adaptive Width Neural Networks (\method{}), a probabilistic framework that maximizes a simple variational objective via backpropagation over a neural network's parameters.

We are given a dataset of $N$ \textit{i.i.d.}\ samples, with input $x \in \mathbb{R}^F, F \in \mathbb{N}^+$ and target $y$ whose domain depends on whether the task is regression or classification. For samples $X \in \mathbb{R}^{N\times F}$ and targets $Y$, the learning objective is to maximize the log-likelihood
\begin{align}
\label{eq:original-objective}
   \log p(Y|X) = \log \prod_{i=1}^N p(y_i| x_i)=\sum_{i=1}^N \log p(y_i| x_i)
\end{align}
with respect to the learnable parameters of $p(y | x)$.

We define $p(y | x)$ according to the graphical model of Figure \ref{fig:new-layer} (left), to learn a neural network with \textbf{unbounded width} for each hidden layer $\ell$. To do so, we assume the existence of an \textbf{infinite} sequence of \textit{i.i.d.}\ latent variables $\boldsymbol{\theta}_\ell=\left\{\theta_{\ell n}\right\}_{n=1}^{\infty}$, where $\theta_{\ell n}$ is a multivariate variable over the learnable weights of neuron $n$ at layer $\ell.$ 
However, since working with an infinite-width layer is not possible in practice, we also introduce a latent variable $\lambda_\ell$ that samples how many neurons to use for layer $\ell$. That is, it \textit{truncates an infinite width to a finite value} so that we can feasibly perform inference with the neural network.
For a neural network of $L$ layers, we define $\boldsymbol{\theta}=\left\{\boldsymbol{\theta}_{\ell}\right\}_{\ell=1}^{L}$ and $\boldsymbol{\lambda}=\left\{\lambda_{\ell}\right\}_{\ell=1}^{L}$, assuming independence across layers. Therefore, by marginalization one can write $p(Y|X)=\int p(Y, \boldsymbol{\lambda}, \boldsymbol{\theta} | X)d\boldsymbol{\lambda}d\boldsymbol{\theta}$. 

We then decompose the joint distribution using the independence assumptions of the graphical model:
\begin{align}
    &p(Y, \boldsymbol{\lambda}, \boldsymbol{\theta} | X) = \prod_{i=1}^N p(y_i, \boldsymbol{\lambda}, \boldsymbol{\theta} | x_i) \ \ \ \ \ \ \ \ \ \ \ \ \ \ \ p(y_i, \boldsymbol{\lambda}, \boldsymbol{\theta} | x_i) = p(y_i | \boldsymbol{\lambda}, \boldsymbol{\theta}, x_i) p(\boldsymbol{\lambda}) p(\boldsymbol{\theta}) \\
    & p(\boldsymbol{\lambda})=\prod^{L}_{\ell=1}p(\lambda_\ell)=\prod^{L}_{\ell=1}\mathcal{N}(\lambda_{\ell}; \mu^\lambda_\ell, \sigma^\lambda_\ell)  \ \ \ \ \ \  p(\boldsymbol{\theta}) = \prod^{L}_{\ell=1}\prod^{\infty}_{n=1}p(\theta_{\ell n})\text{=}\prod^{L}_{\ell=1}\prod^{\infty}_{n=1}\mathcal{N}(\theta_{\ell n}; \mathbf{0}, \text{diag}(\sigma^{\theta}_{\ell})) \\    
    & p(y_i | \boldsymbol{\lambda}, \boldsymbol{\theta}, x_i) = \text{Neural Network as will be introduced in Section \ref{subsec:soft-ordering}}\label{eq:neunet}
\end{align}
\begin{figure*}[t]
\centering
\resizebox{0.8\textwidth}{!}{\input{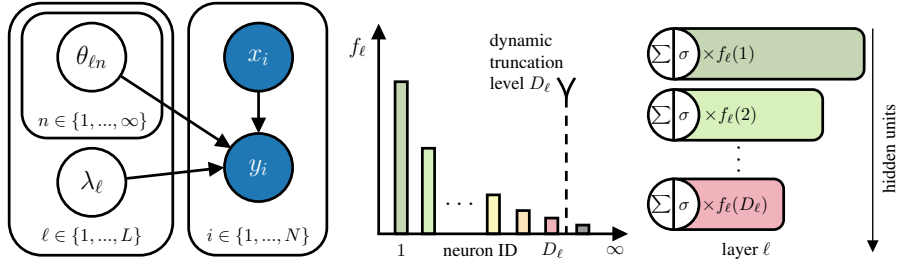}}
\caption{\textit{(Left)} The graphical model of \method{}, with dark observable random variables and white latent ones. \textit{(Middle)} The distribution $f_\ell$ over hidden units' importance at layer $\ell$ is parametrized by $\lambda_\ell$. The width of layer $\ell$ is chosen as the quantile function of the distribution $f_\ell$ evaluated at $k$ and denoted by $D_\ell$. \textit{(Right)} The hidden units' activations at layer $\ell$ are rescaled by their importance.}
\label{fig:new-layer}
\end{figure*}
where $\sigma^{\theta}_{\ell},\mu^\lambda_\ell, \sigma^\lambda_\ell$ are hyper-parameters. For simplicity, we assumed a Gaussian prior for $\boldsymbol{\lambda}$, but please note that, when strict positivity is required, one should formally turn to Folded Normal priors. The neural network is parametrized by realizations $\boldsymbol{\lambda}\sim p(\boldsymbol{\lambda}),\boldsymbol{\theta}\sim\boldsymbol{\theta}$ -- it relies on a finite number of neurons as detailed later and in Section \ref{subsec:soft-ordering} -- and it outputs either class probabilities (classification) or the mean of a Gaussian distribution (regression) to parametrize $p(y_i | \boldsymbol{\lambda}, \boldsymbol{\theta}, x_i)$ depending on the task. Maximizing Equation (\ref{eq:original-objective}), however, requires computing the evidence $\int p(Y, \boldsymbol{\lambda}, \boldsymbol{\theta} | X)d\boldsymbol{\lambda}d\boldsymbol{\theta}$, which is intractable. Therefore, we turn to mean-field variational inference \citep{jordan_introduction_1999,blei_variational_2017} to maximize an expected lower bound (ELBO) instead. This requires to define a variational distribution over the latent variables $q(\boldsymbol{\lambda},\boldsymbol{\theta})$ and re-phrase the objective as:
\begin{align}
    & \log p(Y|X)  \geq
    \mathbb{E}_{q(\boldsymbol{\lambda},\boldsymbol{\theta})}\left[\log \frac{p(Y,\boldsymbol{\lambda},\boldsymbol{\theta} | X)}{q(\boldsymbol{\lambda},\boldsymbol{\theta})} \right],
    \label{eq:elbo}
\end{align}
where $q(\boldsymbol{\lambda},\boldsymbol{\theta})$ is parametrized by learnable \textit{variational} parameters. Before continuing, we define the \textbf{truncated width} $D_\ell$, that is the finite number of neurons at layer $\ell$, as the quantile function evaluated at $k$, with $k$ a hyper-parameter, of a distribution\footnote{General functions are allowed if a threshold can be computed.} $f_\ell$ with infinite support over $\mathbb{N}^+$ and parametrized by $\lambda_\ell$; Appendix \ref{sec:background} provides desirable properties of $f_\ell$ in a similar vein to \citet{nazaret_variational_2022}. In other words, we find the integer such that the cumulative mass function of $f_\ell$ takes value $k$, and that integer $D_\ell$ is the truncated width of layer $\ell$. We implement $f_\ell$ as a \textbf{discretized exponential distribution} adhering to Def. \ref{def:truncated-family-decreasing}, following the discretization strategy of \citet{roy_discrete_2003}: For every $x \in \mathbb{N}^+$, the discretized distribution relies on the exponential's cumulative distribution function:
\begin{align}
\label{eq:discretized-f}
    f_\ell(x;\lambda_\ell) = (1-e^{-\lambda_\ell(x+1)}) - (1-e^{-\lambda_\ell x}).
\end{align}
We choose the exponential because it is a \textbf{monotonically decreasing} function and allows us to impose an ordering of importance among neurons, as detailed in Section \ref{subsec:soft-ordering}. \\
Then, we can factorize the variational distribution $q(\boldsymbol{\lambda},\boldsymbol{\theta})$ into:
\begin{align}
    & q(\boldsymbol{\lambda},\boldsymbol{\theta})=q(\boldsymbol{\lambda})q(\boldsymbol{\theta} | \boldsymbol{\lambda})
    & q(\boldsymbol{\lambda})=\prod_{\ell=1}^L q(\lambda_\ell)=\prod_{\ell=1}^L\mathcal{N}(\lambda_\ell; \nu_\ell, 1) \\
    & q(\boldsymbol{\theta} | \boldsymbol{\lambda})=\prod_{\ell=1}^L \prod_{n=1}^{D_\ell}q(\theta_{\ell n})\prod_{n'=D_\ell+1}^{\infty}p(\theta_{\ell n'})
    & q(\theta_{\ell n}) = \mathcal{N}(\theta_{\ell n}; \rho_{\ell n}, \mathbf{I}).
\end{align}

Here, $\nu_\ell,\rho_{\ell n}$ are learnable variational parameters and, as before, we define $\boldsymbol{\rho}_\ell=\left\{\rho_{\ell n}\right\}_{n=1}^{D_\ell}$, $\boldsymbol{\rho}=\left\{\boldsymbol{\rho}_{\ell}\right\}_{\ell=1}^{L}$ and $\boldsymbol{\nu}=\left\{\nu_\ell\right\}_{\ell=1}^{L}$. Note that the set of variational parameters is \textbf{finite} as it depends on $D_\ell$.

By expanding Equation \ref{eq:elbo} using the above definitions and approximating the expectations at the first order, \textit{i.e.,} $\mathbb{E}_{q(\boldsymbol{\lambda})}[f(\boldsymbol{\lambda})]$=$f(\boldsymbol{\nu})$ and $\mathbb{E}_{q(\boldsymbol{\theta}|\boldsymbol{\lambda})}[f(\boldsymbol{\theta})]=f(\boldsymbol{\rho})$ as in \citet{nazaret_variational_2022}, we obtain the final form of the ELBO
(the full derivation is in Appendix \ref{sec:elbo-derivation}):
\begin{align}
    & \sum_\ell^L\log\frac{p(\nu_\ell;\mu_\ell^\lambda,\sigma_\ell^\lambda)}{q(\nu_\ell;\nu_\ell)} + \sum_\ell^L\sum_{n=1}^{D_\ell}\log \frac{p(\rho_{\ell n};\sigma_\ell^\theta)}{q(\rho_{\ell n};\rho_{\ell n})} + \sum_{i=1}^N \log p(y_i | \boldsymbol{\lambda}\text{=}\boldsymbol{\nu}, \boldsymbol{\theta}\text{=}\boldsymbol{\rho}, x_i),
    \label{eq:objective}
\end{align}
where distributions' parameters are made explicit to distinguish them. The first two terms in the ELBO regularize the width of the layers and the magnitude of the parameters when priors are informative, whereas the third term accounts for the predictive performance. \\ 
In practice, the finite variational parameters $\boldsymbol{\nu},\boldsymbol{\rho}$ are those used to parametrize the neural network rather than sampling $\boldsymbol{\lambda},\boldsymbol{\theta}$, which enables easy optimization via backpropagation. Maximizing Equation (\ref{eq:objective}) will update each variational parameter $\nu_\ell$, which in turn will change the value of $D_\ell$ \textbf{during training}. If $D_\ell$ increases we initialize new neurons and draw their weights from a standard normal distribution, otherwise we discard the weights of the extra neurons. When implementing mini-batch training, the predictive loss needs to be rescaled by $N/M$, where $M$ is the mini-batch size. 
From a Bayesian perspective, this is necessary as regularizers should weigh less if we have more data. 

Compared to a fixed-width network with weight decay, we need to choose the priors' values of $\mu^\lambda_\ell, \sigma^\lambda_\ell$, as well as initialize the learnable $\nu_\ell$. The latter can be initially set to same value across layers since they can freely adapt later, or it can be sampled from the prior $p(\lambda_\ell)$. Therefore, we have two more hyper-parameters compared to the fixed-width network, but we make some considerations: \textbf{i)} it is always possible to use an uninformative prior over $\lambda_\ell$, removing the extra hyper-parameters letting the model freely adapt the width of each layer (as is typical of frequentist approaches); \textbf{ii)} the choice of higher level of hyper-parameters is known to be less stringent than that of hyper-parameters themselves \citep{goel_information_1981,bernardo_bayesian_2009}, so we do not need to explore many values of $\mu_\ell^\lambda$ and $\sigma_\ell^\lambda$; \textbf{iii)} our experiments suggest that \method{} can converge to similar widths regardless of the starting point $\nu_\ell$, so that we may just need to perform model selection over one/two sensible initial values; \textbf{iv)} the more data, the less the priors will matter.

\subsection{Imposing a Soft Ordering on Neurons' Importance}
\label{subsec:soft-ordering}
Now that the learning objective has been formalized, the missing ingredient is the definition of the neural network $p(y_i | \boldsymbol{\lambda}\text{=}\boldsymbol{\nu}, \boldsymbol{\theta}\text{=}\boldsymbol{\rho}, x_i)$ of Equation \ref{eq:neunet} as a modified MLP. Compared to a standard MLP, we use the variational parameters $\boldsymbol{\nu}$ that affect the truncation width at each hidden layer, whereas $\boldsymbol{\rho}$ are the weights. We choose a monotonically decreasing function $f_\ell$, thus when a new neuron is added its relative importance is low and will not drastically impact the network output and hidden activations. In other words, we impose a soft ordering of importance among neurons. \\
We simply modify the classical activation $h^\ell_j$ of a hidden neuron $j$ at layer $\ell$ as 
\begin{align}
    h^\ell_j = \sigma\left(\sum_{k=1}^{D_{\ell-1}} w^\ell_{jk} h^{\ell-1}_k\right) f_{\ell}(j;\nu_\ell),
    \label{eq:unit-rescale}
\end{align}
where $D_{\ell-1}$ is the truncated width of the previous layer, $\sigma$ is a non-linear activation function and $w^\ell_{jk} \in \rho_{\ell j}$. That is, we rescale the activation of each neuron $k$ by its ``importance" $f_{\ell}(j;\nu_\ell)$. Note that the bias parameter is part of the weight vector as usual.

It is easy to see that, in theory, the optimization algorithm could rescale the weights of the next layer by a factor $1 / f_{\ell}(j;\nu_\ell)$ to compensate for the term $f_{\ell}(j;\nu_\ell)$. This could lead to a degenerate situation where the activations of the first neurons are small relative to the others, thus breaking the soft-ordering and potentially wasting neurons.
There are two strategies to address this undesirable effect. The first is to regularize the magnitude of the weights thanks to the prior $p(\theta_{\ell+1 n})$, so that it may be difficult to compensate for the least important neurons that have a high $1/f_{\ell}(j)$. The second and less obvious strategy is to prevent the units' activations of the current layer to compensate for high values by bounding their range, e.g., using a ReLU6 or tanh activation \citep{sandler_mobilenetv2_2018}. We apply both strategies to our experiments, although we noted that they do not seem strictly necessary in practice.

\subsection{Rescaled Weight Initialization for Deep \method{}}
\label{subsec:rescaled-init}
Rescaling the activations of hidden units using Equation \ref{eq:unit-rescale} causes activations of deeper layers to quickly decay to zero for an \method{} MLP with ReLU nonlinearity initialized using the well known Kaiming scheme \citep{he_delving_2015}.
This affects convergence since gradients get close to zero and it becomes slow to train deep \method{} MLPs. We therefore derive a rescaled Kaiming weight that guarantees that the variance of activation across layers is constant at initialization.
\begin{theorem}
Consider an MLP with activations as in Equation \ref{eq:unit-rescale} and ReLU nonlinearity. At initialization, given $\alpha_j^\ell=\sigma\left(\sum_{k=1}^{D_{\ell-1}} w^\ell_{jk} h^{\ell-1}_k\right),  \mathrm{Var}[w_{jk}^\ell] = \frac{2}{\sum_{j=1}^{D_{\ell-1}} f^2_{\ell}(j)} \Rightarrow \mathrm{Var}[\alpha_j^\ell] \approx \mathrm{Var}[\alpha_j^{\ell-1}]$.
\end{theorem}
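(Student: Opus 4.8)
The plan is to mimic the classical Kaiming variance-propagation argument of \citet{he_delving_2015}, but with the incoming activations carrying the per-neuron rescaling factors. Throughout I work at initialization and adopt the standard assumptions: the weights $w^\ell_{jk}$ are i.i.d., zero-mean, mutually independent and independent of the previous layer's activations; biases are zero; and, because each pre-activation is a sum over a wide fan-in of independent terms, I treat it as symmetric about the origin (a CLT/Gaussian approximation that is one source of the ``$\approx$''). I write $z_j^\ell=\sum_{k=1}^{D_{\ell-1}} w^\ell_{jk} h^{\ell-1}_k$ for the pre-activation, so that $\alpha_j^\ell=\sigma(z_j^\ell)$ and the rescaled signal feeding layer $\ell$ is $h^{\ell-1}_k=\alpha^{\ell-1}_k f_{\ell-1}(k)$.

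First I would propagate the variance through the linear map. Using zero-mean weights independent of the inputs, the cross terms vanish and $\mathrm{Var}[z_j^\ell]=\mathrm{Var}[w^\ell]\sum_{k=1}^{D_{\ell-1}} f^2_{\ell-1}(k)\,\mathbb{E}[(\alpha^{\ell-1}_k)^2]$. The key structural observation is that at initialization every neuron $k$ of layer $\ell-1$ sees the same input $h^{\ell-2}$ through i.i.d. weights, so the $\alpha^{\ell-1}_k$ are identically distributed across $k$; hence $\mathbb{E}[(\alpha^{\ell-1}_k)^2]$ is constant in $k$ and factors out of the sum, leaving the ``effective fan-in'' $\sum_k f^2_{\ell-1}(k)$ in place of the usual count $D_{\ell-1}$. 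Under the symmetry assumption the ReLU keeps half the second moment, giving $\mathbb{E}[(\alpha^{\ell-1})^2]=\tfrac12\mathrm{Var}[z^{\ell-1}]$, whence $\mathrm{Var}[z_j^\ell]=\tfrac12\mathrm{Var}[w^\ell]\bigl(\sum_k f^2_{\ell-1}(k)\bigr)\mathrm{Var}[z^{\ell-1}]$.

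Next I would pass from pre-activation variance back to $\mathrm{Var}[\alpha]$. For ReLU applied to a symmetric zero-mean variable, $\mathrm{Var}[\sigma(z)]=c\,\mathrm{Var}[z]$ for a fixed constant $c$ (with $c=\tfrac12-\tfrac1{2\pi}$ in the Gaussian case), and crucially the \emph{same} $c$ appears at every layer. Therefore the target identity $\mathrm{Var}[\alpha_j^\ell]\approx\mathrm{Var}[\alpha_j^{\ell-1}]$ is equivalent to $\mathrm{Var}[z^\ell]=\mathrm{Var}[z^{\ell-1}]$; substituting the recursion and cancelling $\mathrm{Var}[z^{\ell-1}]$ forces $\tfrac12\mathrm{Var}[w^\ell]\sum_k f^2_{\ell-1}(k)=1$, i.e. $\mathrm{Var}[w^\ell]=2/\sum_{k=1}^{D_{\ell-1}} f^2_{\ell-1}(k)$. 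This matches the stated formula once one notes that at initialization the importance profiles are shared across layers (the $\nu_\ell$ are set identically, so $f_{\ell-1}\equiv f_\ell$ on $\{1,\dots,D_{\ell-1}\}$), which lets me write the denominator with $f_\ell$ as in the theorem.

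The main obstacle is honestly handling the non-zero mean introduced by the ReLU: unlike the pre-activation, $\alpha$ is not centred, so $\mathrm{Var}[\alpha]\ne\tfrac12\mathrm{Var}[z]$ and one cannot naively track $\mathrm{Var}[\alpha]$ layer to layer. The clean way around it, as above, is to observe that the ReLU contributes the same proportionality constant $c$ at each layer, so it cancels in the ratio; equivalently, one may carry the second moment $\mathbb{E}[\alpha^2]=\tfrac12\mathrm{Var}[z]$ through the recursion and only convert to variance at the very end. The remaining looseness, reflected in ``$\approx$'', comes from the symmetry/Gaussianity assumption on the pre-activations and the weight–input independence, both of which are exact only in the infinite-width limit.
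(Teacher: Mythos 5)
Your proposal is correct and follows essentially the same route as the paper's Appendix~\ref{sec:new-init-proof}: a Kaiming-style variance-propagation argument in which the rescaling factors enter as an effective fan-in $\sum_k f^2(k)$ and the ReLU contributes the factor $\tfrac12$ via the symmetric-input second-moment lemma (the paper's Lemma~\ref{lemma:relu-variance}), yielding $\mathrm{Var}[w^\ell]=2/\sum_k f^2(k)$ under identically initialized importance profiles across layers. If anything, your bookkeeping is slightly more careful than the paper's ReLU case, since you propagate $\mathbb{E}[\alpha^2]$ and convert to $\mathrm{Var}[\alpha]$ only at the end through a layer-independent constant, avoiding the implicit zero-mean assumption on ReLU outputs; the paper's extended proof additionally derives the backward (gradient-variance) condition, which the theorem statement itself does not require.
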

\begin{proof}
    See Appendix \ref{sec:new-init-proof} for an extended proof.
\end{proof}
The extended proof shows there is a connection between the variance of activations and the variance of gradients. In particular, the sufficient conditions over $\mathrm{Var}[w_{j*}^\ell]$ are identical if one initializes $\nu_\ell$ in the same way for all layers. Therefore, if we initialize weights from a Gaussian distribution with standard deviation $\frac{\sqrt{2}}{\sqrt{\sum_{j=1}^{D_{\ell-1}} f^2_{\ell}(j)}}$, we guarantee that at initialization the variance of the deep network's gradients will be constant. 
The effect of the new initialization (dubbed ``Kaiming+'') can be seen in Figure \ref{fig:new-init-combined}, where the distribution of activation values at initialization does not collapse in subsequent layers. This change drastically impacts overall convergence on the SpiralHard synthetic dataset (described in Section \ref{sec:experiments}), where it appears it would be otherwise hard to converge using a standard Kaiming initialization. 
\begin{wrapfigure}{r}{0.6\textwidth}
\vspace{-0.5cm}
\begin{minipage}{0.6\textwidth}
\begin{algorithm}[H]
\small
\caption{\method{} Training Procedure}
\begin{algorithmic}[1]
\STATE \textbf{Input:} Dataset $\mathcal{D}$, initialized \method{} model $\mathcal{M}$ (Section \ref{subsec:rescaled-init})
\STATE \textbf{Output:} Trained \method{} Model $\mathcal{M}$

\FOR{each training epoch}
    \FOR{batch in $\mathcal{D}$}
        \STATE \text{\textcolor{teal}{update\_width}}($\mathcal{M}$)
        \STATE $\hat{y} \gets \mathcal{M}(\text{batch})$
        \STATE $\text{loss} \gets \text{ELBO}(\mathcal{M}, \text{batch}, \hat{y})$ \hfill \textcolor{gray}{// Eq. \ref{eq:objective}}
        \STATE $\mathcal{M} \gets \text{backpropagation}(\mathcal{M}, \text{loss})$
    \ENDFOR
\ENDFOR

\FUNCTION{\textcolor{teal}{update\_width}($\mathcal{M}$):}
    \FOR{layer $\ell$ in $\mathcal{M}\text{.hidden\_layers}$}
        \STATE $D_{\ell} \gets \text{quantile function of } f_{\ell}(\cdot; \nu_{\ell}) \text{\ evaluated at } k$
        \STATE Use $D_{\ell}$ to update $\boldsymbol{\rho}_{\ell},\boldsymbol{\rho}_{\ell+1}$ \hfill \textcolor{gray}{// add/remove neurons}
    \ENDFOR
\ENDFUNCTION

\end{algorithmic}
\label{alg:awnn-training}
\end{algorithm}

\end{minipage}
\vspace{-0.5cm}
\end{wrapfigure}
Algorithm \ref{alg:awnn-training} summarizes the main changes to the training procedure, namely the new initialization and the update of the model's truncated width at each training step.

\subsection{Future Directions and Limitations}
\label{sec:limitations}
MLPs are ubiquitous in modern deep architectures. They are used at the very end of CNNs, in each Transformer layer, and they process messages coming from neighbors in DGNs. Our experiments focus on MLPs to showcase \method{}'s broad applicability, but there are many other scenarios where one can apply \method{}'s principles. For instance, one could impose a soft ordering of importance on CNNs' filters at each layer, therefore learning the number of filters during training. However, doing so would require a distinct formalization, which is why this lies beyond the scope of the present work. \\
From a more theoretical perspective, we believe one could draw connections between our technique and the Information Bottleneck principle \citep{tishby_information_2000}, which seeks maximally representative (i.e., performance) and compact representations (e.g., width). 
In addition, we could try to revisit the theoretical results of \citet{caron_overparameterised_2025} in the NTK regime to better understand convergence properties of adaptive-width neural networks. since both are based on asymmetric rescaling of activations.

\section{Experiments and Setup}
\label{sec:experiments}

The purpose of the empirical analysis is not to claim \method{} is generally better than the fixed-width baseline. Rather, we demonstrate how \method{} overcomes the problem of fixing the number of neurons by learning it end-to-end, thus reducing the amount of hyper-parameter configurations to test. As such, due to the nature of this work and similarly to \citet{mitchell_self_2023}, we use the remaining space to thoroughly study the behavior of \method{}, so that it becomes clear how to use it in practice. We first quantitatively verify that \method{} does not harm the performance compared to baseline models and compare the chosen width by means of grid-search model selection with the learned width of \method{}. Second, we check that \method{} chooses a larger width for harder tasks, which can be seen as increasing the hypotheses space until the neural network finds a good path to convergence. Third, we verify that convergence speed is not significantly altered by \method{}, so that the main limitation lies in the extra overhead for adapting the network at each training step. As a sanity check, we study conditions under which \method{}'s learned width does not seem to depend on starting hyper-parameters, so that their choice does not matter much. In addition, we analyze other practical advantages of training a neural network under the \method{} framework: the ability to compress information during training or post training, and the resulting trade-offs. Finally, we analyze the impact of different families of functions $f_\ell(j; \nu_\ell)$ compared to the exponential mainly used in this work. Further analyses are in the Appendix.

We compare baselines that undergo proper hyper-parameter tuning (called ``Fixed'') against its \method{} version, where we replace any fixed MLP with an adaptive one.
First, we train an MLP on 3 synthetic tabular tasks of increasing binary classification difficulty, namely a double moon, a spiral, and a double spiral that we call SpiralHard. A stratified hold-out split of 70\% training/10\% validation/20\% test for risk assessment is chosen at random for these datasets. Then, we consider 3 larger real-world tabular tasks with higher feature dimensionality, namely pol, MiniBooNE, and credit card clients \citep{beyazit_inductive_2023}.  Similarly, we consider a ResNet-20 \citep{he_deep_2016} trained on 3 image classification tasks, namely MNIST \citep{lecun_mnist_1998}, CIFAR10, and CIFAR100 \citep{krizhevsky_learning_2009}, where data splits and preprocessing are taken from the original paper and \method{} is applied to the downstream classifier. In the sequential domain, we implemented a basic adaptive Recurrent Neural Network (RNN) and evaluated on the PMNIST dataset \citep{zenke_continual_2017} with the same data split as MNIST. In the graph domain, we train a Graph Isomorphism Network \citep{xu_how_2019} on the NCI1 and REDDIT-B classification tasks, where topological information matters, using the same split and evaluation setup of \citet{errica_fair_2020}. Here, the first 1 hidden layer MLP as well as the one used in each graph convolutional layer are replaced by adaptive \method{} versions. On all these tasks, the metric of interest is the accuracy. Finally, for the textual domain we train a Transformer architecture \citep{vaswani_attention_2017} on the Multi30k English-German translation task \citep{elliot_multi30k_2016}, using a pretrained GPT-2 Tokenizer, and we evaluate the cross-entropy loss over the translated words. On tabular, image, and text-based tasks, an internal validation set (10\%) for model selection is extracted from the union of outer training and validation sets, and the best configuration chosen according to the internal validation set is retrained 10 times on the outer train/validation/test splits, averaging test performances after an early stopping strategy on the validation set. Due to space reasons, we report datasets statistics and the hyper-parameter tried for the fixed and \method{} versions in Appendix \ref{sec:dataset-statistics} and \ref{sec:hyper-parameters}, respectively\footnote{Code to reproduce results is available at \url{https://github.com/nec-research/Adaptive-Width-Neural-Networks}.}. We ran the experiments on a server with 64 cores, 1.5TB of RAM, and 4 NVIDIA A40 with 48GB of memory.

\section{Results}
\label{sec:results}
\begin{table}[t]
\small
\centering
\caption{Performances and total width of MLP layers for the fixed and \method{} versions of the various models used. The exact width chosen by model selection on the graph datasets is unknown since we report published results. ``Linear" means the chosen downstream classifier is a linear model.}
\label{tab:results}
\addtolength{\tabcolsep}{-0.45em}

\begin{tabular}{l rr rr r rr} \toprule
           & \multicolumn{2}{c}{Fixed} & \multicolumn{2}{c}{\method{}} & \makecell{Width \\ (Fixed)} & \multicolumn{2}{c}{\makecell{Width \\ (\method{})}} \\ \cmidrule(lr){2-3} \cmidrule(lr){4-5} \cmidrule(lr){7-8}
           & \makecell{Mean} & \makecell{(Std)} & \makecell{Mean} & \makecell{(Std)} &  & \makecell{Mean} & \makecell{(Std)} \\ \midrule
DoubleMoon & 100.0 & (0.0)    & 100.0 & (0.0)    & 8      & 8.1 & (2.8) \\
Spiral     & 99.5  & (0.5)    & 99.8  & (0.1)    & 16     & 65.9 & (8.7) \\
SpiralHard & 98.0  & (2.0)    & 100.0 & (0.0)    & 32     & 227.4 & (32.4) \\
pol        & 99.3  & (0.2)    & 99.2 & (0.1)    & 48     & 84 & (11.0) \\
MiniBooNE  & 92.9  & (0.3)    & 93.2 & (0.1)    & 32     & 53 & (11.1) \\
credit card& 81.6  & (0.1)    & 81.8 & (0.1)    & 16     & 51 & (12.0) \\
PMNIST     & 91.1  & (0.4)    & 95.7  & (0.2)    & 24 & 806.3 & (44.5) \\
MNIST      & 99.6  & (0.1)    & 99.7  & (0.0)    & Linear & 19.4 & (4.8) \\
CIFAR10    & 91.4  & (0.2)    & 91.4  & (0.2)    & Linear & 80.1 & (12.4) \\
CIFAR100   & 66.5  & (0.4)    & 63.1  & (4.0)    & 256    & 161.9 & (57.8) \\
NCI1       & 80.0  & (1.4)    & 80.0  & (1.1)    & (96-320) & 731.3 & (128.2) \\
REDDIT-B   & 87.0  & (4.4)    & 90.2  & (1.3)    & (96-320) & 793.6 & (574.0) \\
Multi30k ($\downarrow$) & 1.43 & (0.4)  & 1.51  & (0.2)    & 24576  & 123.2 & (187.9) \\ \bottomrule
\end{tabular}
\end{table}
We begin by discussing the quantitative results of our experiments: Table \ref{tab:results} reports means and standard deviations across the 10 final training runs. In terms of performance, we observe that \method{} is more stable or accurate than a fixed MLP on tabular and sequential datasets; all other things being equal, it seems that using more neurons and their soft ordering are the main contributing factors to these improvements. 
On the image datasets, performances of \method{} are comparable to those of the fixed baseline but for CIFAR100, due to an unlucky run that did not converge. There, \method{} learns a smaller total width compared to grid search. 

Results on graph datasets are interesting in two respects: First, the performance on REDDIT-B is significantly improved by \method{} both in terms of average performance and stability of results; second, and akin to PMNIST, the total learned width is significantly higher than those tried in \citet{xu_how_2019,errica_fair_2020}, meaning a biased choice of possible widths had a profound influence on risk estimation for DGN models (i.e., GIN). This result makes it evident that it is important to let the network decide how many neurons are necessary to solve the task. Appendix \ref{sec:retraining} shows what happens when we retrain Fixed baselines using the total width as the width of each layer. \\ Finally, the results on the Multi30k show that the \method{} Transformer learns to use 200x parameters less than the fixed Transformer for the feed-forward networks, achieving a statistically comparable test loss. This result is appealing when read through the lenses of modern deep learning, as the power required by some neural networks such as Large Language Models \citep{brown_language_2020} is so high that cannot be afforded by most institutions, and it demands future investigations.

\paragraph{Adaptation to Task Difficulty and Convergence}
Intuitively, one would expect that \method{} learned larger widths for more difficult tasks. This is indeed what happens on the tabular datasets (and image datasets, see Appendix \ref{sec:task-difficulty-images}) where some  tasks are clearly harder than others. Figure \ref{fig:task-difficulty-toy} (left) shows that, given the same starting width per layer, the learned number of neurons grows according to the task's difficulty. It is also interesting to observe that the total width for a multi-layer MLP on SpiralHard is significantly lower than that achieved by a single-layer MLP, which is consistent with the circuit complexity theory arguments put forward in \citet{bengio_greedy_2006,mhaskar2017and}. It also appears that convergence is not affected by the introduction of \method{}, as investigated in Figure \ref{fig:task-difficulty-toy} (right), which was not obvious considering the parametrization constraints encouraged by the rescaling of neurons' activations.
\begin{figure}[ht]
\centering
\begin{subfigure}[th]{0.4845\textwidth}
    \centering
    \includegraphics[width=\textwidth]{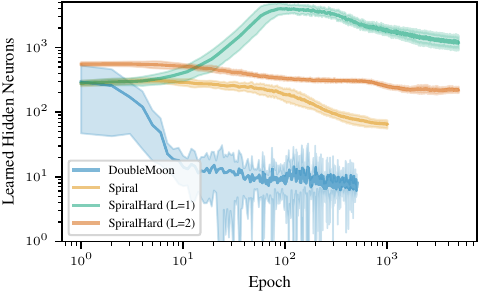}
\end{subfigure}
\hfill
\begin{subfigure}[th]{0.4845\textwidth}
    \centering
    \includegraphics[width=\textwidth]{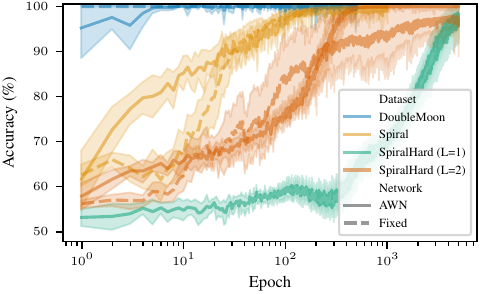}
\end{subfigure}
\caption{(Left) The learned width adapts to the increasing difficulty of the task, from the DoubleMoon to SpiralHard. (Right) \method{} reaches perfect test accuracy with a comparable amount of epochs on DoubleMoon and Spiral, while it converges faster on SpiralHard.}
\label{fig:task-difficulty-toy}
\end{figure}
\paragraph{Training Stability Analysis}
To support our argument that \method{} can reduce the time spent performing hyper-parameter selection, we check whether \method{} learns a consistent amount of neurons across different training runs and hyper-parameter choices. Figure \ref{fig:ablation-main} reports the impact of the batch size and starting width averaged across the different configurations tried during model selection. Smaller batch sizes cause more instability, but in the long run we observe convergence to a similar width. Convergence with respect to different rates holds, instead, for the bounded ReLU6 activation; Appendix \ref{sec:ablation-detail} shows that unbounded activations may cause the network to converge more slowly to the same width, which is in accord with the considerations about counterbalancing the rescaling effect of Section \ref{subsec:soft-ordering}. Therefore, whenever possible, we recommend using bounded activations. 
\begin{figure}[h]
\centering
\begin{subfigure}[th]{0.4845\textwidth}
    {\includegraphics[width=\textwidth]{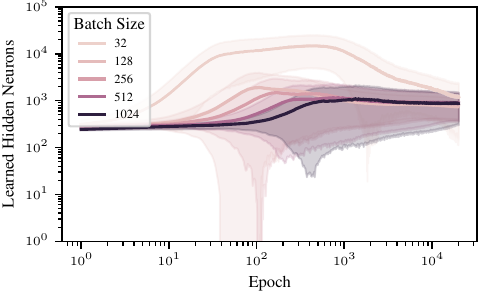}}
\end{subfigure}
\quad
\centering
\begin{subfigure}[th]{0.4845\textwidth}
    {\includegraphics[width=\textwidth]{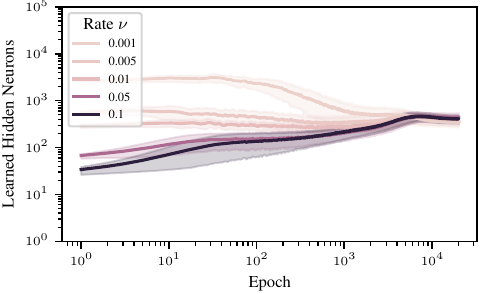}}
\end{subfigure}
\caption{Training converges to similar widths on SpiralHard for different batch sizes (left) and starting rates $\nu$, but the latter seems to require a bounded nonlinearity such as ReLU6 to converge in a reasonable amount of epochs (right).}
\label{fig:ablation-main}
\end{figure} 

\paragraph{Online Network Compression via Regularization}

So far, we have used an uninformative prior $p(\lambda)$ over the neural networks' width. We demonstrate the effect of an informative prior by performing a width-annealing experiment on the SpiralHard dataset. We set an uninformative $p(\boldsymbol{\theta})$ and ReLU6 nonlinearity. At epoch 1000, we introduce $p(\lambda_\ell)=\mathcal{N}(\lambda_{\ell}; 0.05, 1)$, and gradually anneal the standard deviation up to $0.1$ at epoch $2500$. Figure \ref{fig:annealing} shows that the width of the network reduces from approximately 800 neurons to 300 without any test performance degradation. We hypothesize that the least important neurons carry negligible information, therefore they can be safely removed without drastic changes in the output of the model. \textit{This technique might be particularly useful to compress large models with billions of parameters.}
\begin{figure}[h]
\centering
\includegraphics[width=\textwidth]{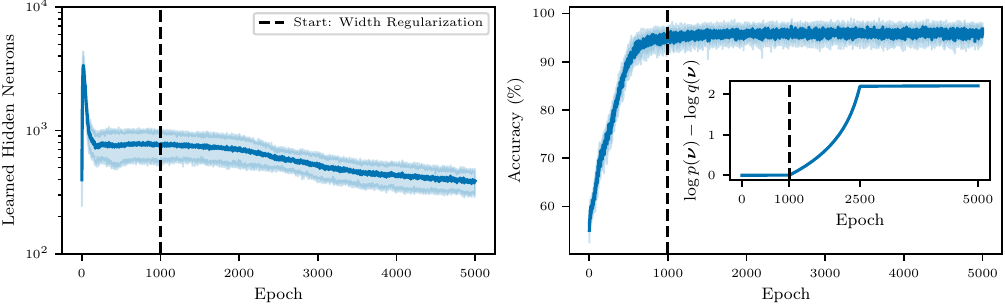} 
\caption{It is possible to regularize the width at training time by increasing the magnitude of the loss term $\log\frac{p(\boldsymbol{\nu})}{q(\boldsymbol{\nu})}$. The total width is reduced by more than 50\% (left) while preserving accuracy (right). The inset plot refers to the loss term that \method{} tries to maximize.}
\label{fig:annealing}
\end{figure}

\paragraph{Post-hoc Truncation Achieves a Trade-off between Performance and Compute Resources}
To further investigate the consequences of imposing a soft ordering among neurons, we show that it is possible to perform straightforward post-training truncation while still controlling the trade-off between performance and network size. Figure \ref{fig:spiral-truncation} shows an example for an MLP on the Spiral dataset, where the range of activation values (Equation \ref{eq:unit-rescale}) computed for all samples follows an exponential curve (right). Intuitively, removing the last neurons may have a negligible performance impact at the beginning and a drastic one as few neurons remain. This is what happens, where we are able to cut an MLP with hidden width 83 by 30\% without loss of accuracy, after which a smooth degradation happens. If one accepts such a trade-off, this technique may be used to ``distill'' a trained neural network at virtually zero additional cost while reducing the memory requirements. Note that truncation heuristics that are either random or based on the magnitude of neurons' activations (i.e., excluding the rescaling term) do not perform as well.

\begin{figure}[th]
\centering
\includegraphics[width=\textwidth]{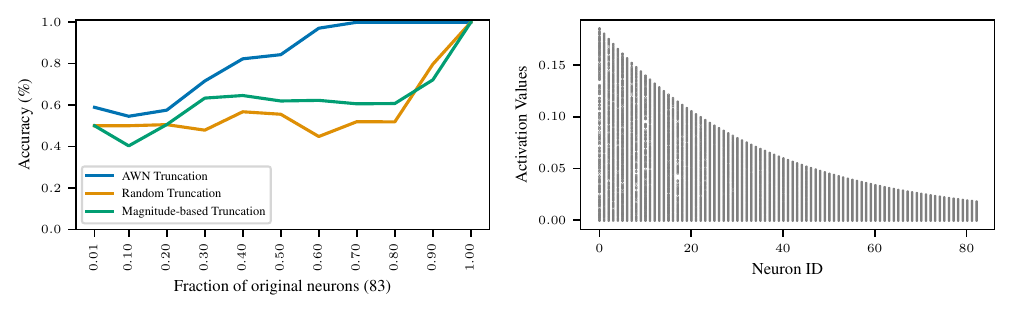} 
\caption{(Left) Thanks to the soft ordering imposed on the neurons, one can also truncate the neural network \textit{after training} by simply removing the last neurons. (Right) The distribution of neurons' activations for all Spiral test samples follows an exponential-like curve.}
\label{fig:spiral-truncation}
\end{figure}

\paragraph{Impact of Different Functions $f_\ell(j; \nu_\ell)$}

We conclude our main analyses with an ablation study on the use of different decreasing functions $f_\ell(j; \nu_\ell)$, considering the SpiralHard dataset, to understand how different functions have different properties. We tested: i) a Power Law distribution $f_\ell(j; \nu_\ell, \alpha)=Z(j+j_{sat})^{-\nu_\ell}$, with $Z$ being a normalizing term, initial $\nu_\ell \in [2., 3.]$ and low degree saturation $j_{sat}$ chosen between $[0., 2.]$; ii) a sigmoid-like function $f_\ell(j; \nu_\ell, b)=1-\sigma(j-\nu_\ell)$, where $\sigma$ is the sigmoidal activation, $\nu_\ell \in [128, 256]$ is the transition point.

The results are reported in Table \ref{tab:impact-distributions}. When averaged over all possible hyper-parameter configurations, there is no significant deviation in performance between distribution, and there is always at least one configuration attaining the best performance. However, the Power Law distribution introduces more neurons than the exponential due to its long-tail. Instead, the Sigmoidal function tends to allocates less neurons, but the drawback is the loss of the ability to truncate the network after training, since the importance of neurons is very close to 1 before the transition point and the transition itself, for the chosen fixed parameters, is relatively sharp. The choice of the function family $f_\ell(j; \nu_\ell)$ remains an interesting direction of further exploration, given that each family has different properties that might be more or less suitable depending on the use cases.

\begin{table}[h!]
\centering
\small
\caption{Analysis of the impact of different families of importance functions on SpiralHard validation performance and learned width, averaged across different hyper-parameter configurations.}
\begin{tabular}{lcccccc}
\hline
\textbf{$f_\ell(j; \nu_\ell)$} & \textbf{Mean Accuracy} & \textbf{Max Accuracy} & \textbf{Mean Total Width} \\
\hline
Exponential & 80.27 (19.9) & 100.00 & 954.4 (1083.9) \\
Power Law    & 81.82 (16.6) & 100.00 & 2952.4 (3371.6) \\
Sigmoidal     & 76.85 (18.3) & 100.00 & 426.8 (268.4) \\
\hline
\end{tabular}
\label{tab:impact-distributions}
\end{table}

\section{Conclusions}
\label{sec:conclusions}
We introduced a new methodology to learn an unbounded width of neural network layers within a single training, by imposing a soft ordering of importance among neurons. Our approach requires very few changes to the architecture, adapts the width to the task's difficulty, and does not impact negatively convergence. We showed stability of convergence to similar widths under bounded activations for different hyper-parameters configurations, advocating for a practical reduction of the width's search space. A by-product of neurons' ordering is the ability to easily compress the network during or after training, which is relevant in the context of foundational models trained on large data, which are believed to require billions of parameters. Finally, we have tested \method{} on different models and data domains to prove its broad scope of applicability: a Transformer architecture achieved a similar loss with 200x less parameters.

\section{Reproducibility Statement}
The code provided in the supplementary material relies on libraries for automatic experimentation, which enforce reproducibility of the results. These libraries require that the user specifies every detail, from data splitting strategies to hyper-parameter selection and final evaluations, and subsequently the experiment is automatized and results are provided.

\bibliography{bibliography}
\bibliographystyle{iclr2026_conference}

\appendix
\section{Truncated Distributions' Notions}
\label{sec:background}

Since we were mostly inspired by the work of \citet{nazaret_variational_2022}, we complement the main text with an introduction to truncated distributions. In particular, by truncating a distribution at its quantile function evaluated at $k$ (see Appendix \ref{sec:quantile-function} for a visual explanation), the support of the distribution becomes finite and countable, hence we can compute expectations in a finite number of steps. A truncated distribution should ideally satisfy some requirements defined in \citet{nazaret_variational_2022} and illustrated below.

\begin{definition}[\citet{nazaret_variational_2022}]
    A variational family $Q = {q(x; \boldsymbol{\omega})}$ over $\mathbb{N}^+$ is \textit{unbounded} with \textit{connected} and \textit{bounded} members if
    \begin{enumerate}
        \item $\forall q \in Q$, $\operatorname{support}(q)$ is bounded
        \item $\forall L \in \mathbb{N}^+, \exists q \in Q$ such that $L \in \operatorname{argmax}(q)$
        \item Each parameter in the set $\boldsymbol{\omega}$ is a continuous variable.        
    \end{enumerate}
    \label{def:truncated-family}
\end{definition}

The first condition allows us to compute the expectation over any $q \in Q$ in finite time, condition 2 ensures there is a parametrization that assigns enough probability mass to each point in the support of $q$, and condition 3 is necessary for learning via backpropagation.

An important distinction with \citet{nazaret_variational_2022} is that \textbf{we do not need nor want condition 2 to be satisfied}. As a matter of fact, we want to enforce a strict ordering of neurons where the first is always the most important one. Condition 2 is useful, as done in \citet{nazaret_variational_2022}, to assign enough `importance'' to a specific layer of an adaptive-depth architecture, while previous layers are still functional to the final result. In the context of adaptive-width networks, however, this mechanism may leave a lot of neurons completely unutilized, letting the network grow indefinitely. That is why we require that each distribution $f_\ell$ is monotonically decreasing and all possible widths should have non-zero importance. We change the definition for the distributions we are interested in as follows:

\begin{definition}
    A family of distributions $Q = {f(x; \boldsymbol{\lambda})}$ over $\mathbb{N}^+$ is \textit{unbounded} with \textit{connected}, \textit{bounded}, and \textit{decreasing} members if
    \begin{enumerate}
        \item $\forall f \in Q$, $\operatorname{support}(f)$ is bounded
        \item $\forall f \in Q$, $f$ is monotonically decreasing and $\forall x \in \mathbb{N}^+, f(x) > 0$
        \item Each parameter in the set $\boldsymbol{\lambda}$ is a continuous variable.        
    \end{enumerate}
    \label{def:truncated-family-decreasing}
\end{definition}

\section{How to Compute $D_\ell$ in Practice}
\label{sec:quantile-function}
To aid the understanding of how one computes $D_\ell$, we provide a visualization of the exponential distribution, its discretized version (Equation \ref{eq:discretized-f}) and the quantile function evaluated at $k=0.9$ in Figure \ref{fig:quantile-function-visualization} below.

Note that typically, newly introduced neurons are multiplied by a very small importance, and their random weights are sampled from a standard Gaussian. This makes their initial contribution very small, so it does not drastically alter the loss/gradients/convergence. From a backpropagation viewpoint, the reason why new neurons are introduced is that previous neurons need to become relatively more important. So, when the chosen importance distributions grow too slowly, as in the case of a Power law family of distributions, this can slow down convergence because it takes a long time to make the last neurons more important, and in addition a large tail or unimportant neurons is added if the chosen quantile is too high. The exponential distribution, instead, worked pretty well without having to tune it extensively.
\begin{figure}[ht]
    \centering
    \includegraphics[width=\linewidth]{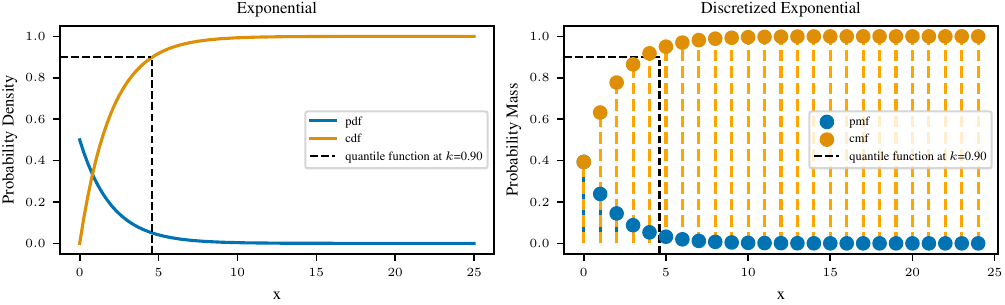}
    \caption{On the left, we provide a visualization of the exponential density with $\lambda=0.5$, its cumulative density, and the quantile function evaluated at $k=0.9$, which returns a value close to $5$. On the right, we show the discretized exponential and its cumulative mass function; we can see that the quantile function of the continuous density provides an upper bound for the natural where the cumulative mass function is greater than $k=0.9$. Therefore, one can safety use the ceiling of the output of the quantile function to determine the width $D_\ell$ to use for a given value of $\lambda$.    }
    \label{fig:quantile-function-visualization}
\end{figure}

\section{Full ELBO Derivation}
\label{sec:elbo-derivation}
The full ELBO derivation is as follows:
\begin{align}
    & \mathbb{E}_{q(\boldsymbol{\lambda},\boldsymbol{\theta})}\left[\log \frac{p(Y,\boldsymbol{\lambda},\boldsymbol{\theta} | X)}{q(\boldsymbol{\lambda},\boldsymbol{\theta})} \right] = \\
    & \mathbb{E}_{q(\boldsymbol{\lambda})}\mathbb{E}_{q(\boldsymbol{\theta}|\boldsymbol{\lambda})}[\log p(Y,\boldsymbol{\lambda},\boldsymbol{\theta} | X) - \log q(\boldsymbol{\lambda},\boldsymbol{\theta})] = \\
    & \mathbb{E}_{q(\boldsymbol{\lambda})} \mathbb{E}_{q(\boldsymbol{\theta|\lambda})}[\log (p(Y | \boldsymbol{\lambda},\boldsymbol{\theta}, X)p(\boldsymbol{\theta})p(\boldsymbol{\lambda})) - \log (q(\boldsymbol{\lambda}) q(\boldsymbol{\theta}| \boldsymbol{\lambda}))]
\end{align}
where we factorized probabilities according to the independence assumptions of the graphical model. Then
\begin{align}
    & \mathbb{E}_{q(\boldsymbol{\lambda})} \mathbb{E}_{q(\boldsymbol{\theta|\lambda})}[\log p(Y | \boldsymbol{\lambda},\boldsymbol{\theta}, X) + \log p(\boldsymbol{\lambda}) + \log p(\boldsymbol{\theta}) - \log q(\boldsymbol{\lambda}) - \log q(\boldsymbol{\theta}|\boldsymbol{\lambda})] = \\
    & \mathbb{E}_{q(\boldsymbol{\lambda})} \mathbb{E}_{q(\boldsymbol{\theta|\lambda})}[\log \frac{p(\boldsymbol{\lambda})}{q(\boldsymbol{\lambda})} + \log \frac{p(\boldsymbol{\theta})}{q(\boldsymbol{\theta}|\boldsymbol{\lambda})} + \log p(Y | \boldsymbol{\lambda},\boldsymbol{\theta}, X)] \approx \\
    & \log \frac{p(\boldsymbol{\nu})}{q(\boldsymbol{\nu})} + \log \frac{p(\boldsymbol{\rho})}{q(\boldsymbol{\rho}|\boldsymbol{\nu})} + \log p(Y | \boldsymbol{\nu},\boldsymbol{\rho}, X)]    
\end{align}

where we use the first-order approximation in the last step. Note that when computing $\frac{p(\theta)}{q(\theta|\lambda)}$, the products from $D_{\ell+1}$ to infinity cancel out. Equation 13 follows by simply applying definitions, the iid assumption $p(Y | \boldsymbol{\nu},\boldsymbol{\rho}, X)=\prod_i^N p(y_i | \boldsymbol{\nu},\boldsymbol{\rho}, x_i)$, and by transforming products into summations using the logarithm.

An important note about the first-order (Taylor) approximation $E_{q(\boldsymbol{\lambda};\boldsymbol{\nu})q(\boldsymbol{\theta}|\boldsymbol{\lambda};\boldsymbol{\rho})}[f(\boldsymbol{\lambda}, \boldsymbol{\theta})]\approx E_{q(\boldsymbol{\lambda};\boldsymbol{\nu})}[f(\boldsymbol{\lambda}, \boldsymbol{\rho})]\approx f(\boldsymbol{\nu}, \boldsymbol{\rho})$ is that the function $f(\boldsymbol{\lambda}, \boldsymbol{\theta})=\log \frac{p(\boldsymbol{\lambda})}{q(\boldsymbol{\lambda})} + \log \frac{p(\boldsymbol{\theta})}{q(\boldsymbol{\theta}|\boldsymbol{\lambda})} + \log p(Y | \boldsymbol{\lambda},\boldsymbol{\theta}, X)$ is not differentiable with respect to $D_\ell$ \textit{when there is a jump} in the quantile evaluated at $k$, so it would seem we cannot apply it. This also happens in \citet{nazaret_variational_2022}, Equation 11, when the function is a neural network containing non-differentiable activations like ReLU, but we could not find a discussion about it. Importantly, the function is not differentiable at a countable set of points (one for each possible width), which has a measure null. Since the Lebesgue integral of the expectation is over reals, we should be able to remove this set from the integral while ensuring that the approximation still holds mathematically.

\section{Lipschitz Continuity of the ELBO}
\label{sec:new-liptschitz}
Whenever the quantile of the finite distribution changes, we need to change the number of neurons of the neural network. Here, we want to show that the change in the ELBO after a change in the number of neurons is bounded. Therefore, we provide a theoretical result (\cref{th:continuity}) that shows that the ELBO satisfies the Lipschitz continuity. 

\begin{proposition} [\method{} ELBO Lipschitz continuity]
\label{th:continuity}
The ELBO loss of \cref{eq:objective}, with respect to the change in the depth $D_\ell$ for the layer $\ell$, is Lipschitz continuous.    
\end{proposition}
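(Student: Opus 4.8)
The plan is to treat the discrete update $D_\ell \mapsto D_\ell \pm 1$ and show that the resulting change in the objective of \cref{eq:objective} is uniformly bounded, i.e.\ $|\mathcal{L}(D_\ell+1)-\mathcal{L}(D_\ell)| \le K$ for a constant $K$ independent of $D_\ell$. Since $D_\ell$ ranges over the integers, this bounded-difference property is exactly the (discrete) Lipschitz condition we are after. First I would decompose the ELBO into its three summands and analyze each separately. The first term, $\sum_\ell \log\frac{p(\nu_\ell)}{q(\nu_\ell)}$, does not depend on $D_\ell$ at all, so it contributes nothing to the difference.

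For the second (weight-regularization) term, increasing $D_\ell$ by one appends a single summand $\log\frac{p(\rho_{\ell,D_\ell+1};\sigma^\theta_\ell)}{q(\rho_{\ell,D_\ell+1};\rho_{\ell,D_\ell+1})}$. Since both densities are Gaussian and $q$ is evaluated at its own mean, this collapses to a term of the form $-\tfrac12\sum_d[\log\sigma^\theta_{\ell,d}+\rho_d^2/\sigma^\theta_{\ell,d}]$ up to additive constants, which is bounded as soon as we assume — consistent with the weight regularizer and the standard-normal initialization of new neurons — that the weight means lie in a compact set and the prior variances are bounded away from $0$ and $\infty$. Hence this term changes by at most a constant.

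The heart of the argument, and the main obstacle, is the predictive term $\sum_i \log p(y_i\mid\boldsymbol{\nu},\boldsymbol{\rho},x_i)$. Here I would exploit the soft-ordering structure of \cref{eq:unit-rescale}: the newly added neuron $D_\ell+1$ enters layer $\ell$ with its activation rescaled by the importance weight $f_\ell(D_\ell+1;\nu_\ell)$, which is itself bounded (indeed $\le f_\ell(1)$, since $f_\ell$ is monotonically decreasing by \cref{def:truncated-family-decreasing}). With a bounded nonlinearity (ReLU6 or tanh, as recommended in \cref{subsec:soft-ordering}) and bounded weights, the contribution $w^{\ell+1}_{j,D_\ell+1}\,h^\ell_{D_\ell+1}$ of the new unit to the pre-activations of layer $\ell+1$ is uniformly bounded. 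I would then propagate this perturbation forward: writing each layer map as a composition of an affine map (bounded operator norm under bounded weights) and a Lipschitz activation, the change in the network output is controlled by a product of per-layer Lipschitz constants that is independent of $D_\ell$. Finally, since the output likelihood — softmax cross-entropy with bounded logits, or a Gaussian with bounded mean — is Lipschitz in the network output, the per-sample change in $\log p(y_i\mid\cdot)$ is bounded, and summing over the $N$ fixed samples (absorbing the $N/M$ minibatch rescaling into the constant) yields a bound.

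The subtle points I expect to spend the most care on are: (i) making precise the boundedness assumptions on the weights $\boldsymbol{\rho}$ and on the activations that render every per-layer Lipschitz constant finite and $D_\ell$-independent; and (ii) propagating the single-neuron perturbation through all downstream layers without the accumulated bound depending on the (growing) widths. This second point is where bounded activations are essential: an unbounded activation could in principle amplify the perturbation through the compensating $1/f_\ell$ rescaling discussed in \cref{subsec:soft-ordering}. Collecting the three bounds gives a single constant $K$, and therefore the claimed Lipschitz continuity.
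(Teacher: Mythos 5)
Your proposal follows essentially the same route as the paper's proof: the same three-term decomposition of the ELBO, the observation that the $\nu$-term is $D_\ell$-independent, a per-added-neuron bound (a max over log-ratio terms) for the weight-regularization term, and an argument that the predictive term changes by a bounded amount because the newly added neurons contribute an additive, Lipschitz-controlled perturbation to the network output. The only difference is one of detail: you spell out the boundedness assumptions (bounded weights, bounded activations, the decaying importance factor $f_\ell(D_\ell+1)$) and propagate the single-neuron perturbation layer by layer, whereas the paper argues more tersely via the additive form $y' = V\sigma(Wx+b)+c+V'\sigma(W'x+b')+c'$ and the Lipschitzness of networks with bounded-gradient activations — both rest on the same implicit assumptions, so this is a refinement rather than a different proof.
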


\begin{proof}    
We focus on the term involving $D_\ell$ of the ELBO, we write \cref{eq:objective} as
$$
\log \frac{p(\boldsymbol{\nu})}{q(\boldsymbol{\nu})} + \log \frac{p(\boldsymbol{\rho})}{q(\boldsymbol{\rho}|\boldsymbol{\nu})} + \log p(Y | \boldsymbol{\nu},\boldsymbol{\rho}, X)
$$
where only the second and last terms depend on $\mathbf D = \{ D_\ell \}_{\ell=1}^L$ . Let's first define
$$
\log \frac{p(\boldsymbol{\rho})}{q(\boldsymbol{\rho}|\boldsymbol{\nu})} =
\sum_{{\ell}=1}^{L} \sum_{n=1}^{D_{\ell}} \log \frac{p(\rho^{\ell}_n)}{q(\rho^{\ell}_n|\boldsymbol{\nu})} = \sum_{{\ell}=1}^{L} f_1(D_\ell)
$$
We have that $f_1(D_\ell)$ is Lipschitz continuous, indeed, when $D_\ell$ changes to $D_{l'}$, we have 
\begin{align}
| f_1(D_{\ell'}) - f_1(D_{\ell})|& =| \sum_{n=D_{\ell}}^{D_{\ell'}} \log \frac{p(\boldsymbol{\rho}_n)}{q(\boldsymbol{\rho}_n|\boldsymbol{\nu})}| \\
& \le \sum_{n=D_{\ell}}^{D_{\ell'}} | \log \frac{p(\boldsymbol{\rho}_n)}{q(\boldsymbol{\rho}_n|\boldsymbol{\nu})}| \\
&\le \max_n | \log \frac{p(\boldsymbol{\rho}_n)}{q(\boldsymbol{\rho}_n|\boldsymbol{\nu})}| |D_{\ell'} - D_{\ell}|
\end{align}
Therefore
$$
| f_1(D_{\ell'}) - f_1(D_{\ell})| \le M |D_{\ell'} - D_{\ell}|
$$
with $M = \max_n | \log \frac{p(\boldsymbol{\rho}_n)}{q(\boldsymbol{\rho}_n|\boldsymbol{\nu})}|$. 
We now look at the last term,
$$
f_2(\mathbf D) = \log p(Y | \boldsymbol{\nu},\boldsymbol{\rho}, X)
$$
This function is a neural network followed by the squared norm. Therefore, $f_2(\mathbf D)$ is continuous almost everywhere \citep{virmaux2018lipschitz}, therefore Lipschitz on both parameters and the input. When we change $D_\ell$ to $D_{\ell'}$, we are adding network parameters, in particular, if we look at the two-layer network, we have
$$
y = V \sigma (Wx+b) + c, V \in \mathbb R^{m \times n}, W \in \mathbb R^{n \times d}
$$
we then have new parameters, which can be shown to be 
$$
y' = V \sigma (Wx+b) + c + \underbrace{V'\sigma (W'x+b') + c'}_{\text{new neurons}} , V' \in \mathbb R^{m \times n'}, W' \in \mathbb R^{n' \times d}, b' \in \mathbb R^{n'}, c' \in \mathbb R^m
$$
the second term is also Lipschitz, therefore the function $f_2(\mathbf D)$ is also Lipschitz. Indeed the network is composed of a linear operation and the element-wise activation functions. If the activation functions are continuous and of bounded gradient, then the whole network is Lipschitz continuous. 
\end{proof}

\section{Full derivation of the rescaled weight initialization}
\label{sec:new-init-proof}
This Section derives the formulas for the rescaled weight initialization both in the case of ReLU and activations like tanh.

\paragraph{Background} We can rewrite Equation \ref{eq:unit-rescale} as
\begin{align}
        & h^\ell_i = \alpha^\ell_i p^{\ell}_i \\
        & \alpha^\ell_i = \sigma\left(\sum_{j=1}^{D_{\ell-1}} w^\ell_{ij}  \underbrace{\alpha^{\ell-1}_j  p^{\ell-1}_j}_{h^{\ell-1}_j} \right)
        \label{eq:unit-rescale-rewritten}
\end{align}
where $p^{\ell}_i = f_{\ell}(i)$.

As a refresher, the chain rule of calculus states that, given two differentiable functions $g: \mathbb{R}^D \rightarrow \mathbb{R}$ and $f=(f_1,\dots,f_D): \mathbb{R} \rightarrow \mathbb{R}^D$, their composition $g \circ f: \mathbb{R} \rightarrow \mathbb{R}$ is differentiable and 
\begin{align}
    & \left(g \circ f\right)'(t)=\nabla g(f(t))^T f'(t) \nonumber \\
    & \nabla g(f(t)) = \left(\frac{\partial g(f_1(t))}{\partial f_1(t)},\dots,\frac{\partial g(f_D(t))}{\partial f_D(t)}\right) \in \mathbb{R}^{1\times D} \nonumber \\
    & f'(t) = \left(f_1'(t),\dots,f_D'(t)\right) \in \mathbb{R}^{D\times 1} \nonumber
\end{align}

For reasons that will become clear later, we may want to compute the gradient of the loss function with respect to the intermediate activations $\boldsymbol{\alpha}^\ell$ at a given layer, that is $\nabla\mathcal{L}(\boldsymbol{\alpha}^\ell)=\left(\frac{\partial\mathcal{L}(\alpha_1^\ell)}{\partial \alpha_1^\ell},\dots,\frac{\partial \mathcal{L}(\alpha_{N^\ell}^\ell)}{\partial \alpha_{N^\ell}^\ell}\right)$. We focus on the $i$-th partial derivative $\frac{\partial\mathcal{L}(\alpha_i^\ell)}{\partial \alpha_i^\ell}$, where the only variable is $\alpha_i^\ell$. Then, we view the computation of the loss function starting from $\alpha_i^\ell$ as a composition of a function $\boldsymbol{\alpha}^{\ell+1}: \mathbb{R} \rightarrow \mathbb{R}^{N^{\ell+1}}=\left(\alpha_1^{\ell+1}(\alpha_i^\ell)\dots,\alpha_{{N^{\ell+1}}}^{\ell+1}(\alpha_i^\ell)\right)$ and another function (abusing the notation) $\mathcal{L}: \mathbb{R}^{N^{\ell+1}} \rightarrow \mathbb{R}$ that computes the loss value starting from $\boldsymbol{\alpha}^{\ell+1}$. By the chain rule:
\begin{align}
    \underbrace{\frac{\partial\mathcal{L}(\alpha_i^\ell)}{\partial \alpha_i^\ell}}_{\left(g \circ f\right)'(t)} = \underbrace{\left(\frac{\partial \mathcal{L}(\alpha_1^{\ell+1})}{\partial \alpha_1^{\ell+1}},\dots,\frac{\partial \mathcal{L}(\alpha_{N^{\ell+1}}^{\ell+1})}{\partial \alpha_{N^{\ell+1}}^{\ell+1}}\right)^T}_{\nabla g(f(t))^T}\underbrace{\left(\frac{\partial \alpha_{1}^{\ell+1}(\alpha_{i}^{\ell})}{\partial \alpha_{i}^{\ell}},\dots,\frac{\partial \alpha_{N^{\ell+1}}^{\ell+1}(\alpha_{i}^{\ell})}{\partial \alpha_{i}^{\ell}}\right)}_{f'(t)}
    \label{eq:chain-rule-grad}
\end{align}

\paragraph{Theorem 3.1}
\textit{Let us consider an MLP with activations as in Equation \ref{eq:unit-rescale-rewritten}. Let us also assume that the inputs and the parameters have been sampled independently from a Gaussian distribution with zero mean and variance $\sigma^2=\mathrm{Var}[w^\ell_{ij}] \ \forall i,j,\ell$. 
At initialization, the variance of the responses $\alpha_i^\ell$ across layers is constant if, $\forall \ell \in \{1,\dots,L\}$}
\begin{align} 
    & \mathrm{Var}[w^\ell] = \frac{1}{\sum_j^{D_{\ell-1}} \left(p^{\ell+1}_j\right)^2} \ \ \text{for activation $\sigma$ such that $\sigma'(0) \approx 1$} \\
    & \mathrm{Var}[w^\ell] = \frac{2}{\sum_j^{D_{\ell-1}} \left(p^{\ell+1}_j\right)^2} \ \ \text{for the ReLU activation}.
\end{align}
\textit{In addition, we provide closed form formulas to to preserve the variance of the gradient across layers.}

\begin{proof}
Let us start from the first case of $\sigma'(0) \approx 1$. Using the Taylor expansions for the moments of functions of random variables as in \citet{glorot_understanding_2010}
\begin{align}
    & \mathrm{Var}[\alpha^\ell_i] = \mathrm{Var}\left[\sigma\left(\sum_{j=1}^{D_{\ell-1}} w^\ell_{ij}  \alpha^{\ell-1}_j  p^{\ell-1}_j\right)\right] \\
    & \approx \sigma'\left(\mathbb{E}\left[\sum_{j=1}^{D_{\ell-1}} w^\ell_{ij}  \alpha^{\ell-1}_j  p^{\ell-1}_j\right]\right) \mathrm{Var}\left[\sum_{j=1}^{D_{\ell-1}} w^\ell_{ij}  \alpha^{\ell-1}_j  p^{\ell-1}_j\right]
    \label{eq:variance-approxim-tanh}
\end{align}
Using the fact that $p_j$ is a constant and that $w$ and $\alpha$ are independent from each other
\begin{align}
    & \mathbb{E}\left[\sum_{j=1}^{D_{\ell-1}} w^\ell_{ij}  \alpha^{\ell-1}_j  p^{\ell-1}_j\right] = \sum_{j=1}^{D_{\ell-1}} \underbrace{\mathbb{E}[w^\ell_{ij}]}_{0}  \mathbb{E}[\alpha^{\ell-1}_j]  p^{\ell-1}_j = 0.
\end{align}
Therefore, recalling that $\sigma'(0) \approx 1$
\begin{align}
    & \sigma'\left(\mathbb{E}\left[\sum_{j=1}^{D_{\ell-1}} w^\ell_{ij}  \alpha^{\ell-1}_j  p^{\ell-1}_j\right]\right) = 1.
\end{align}
As a result, we arrive at 
\begin{align}
    & \mathrm{Var}[\alpha^\ell_i] \approx \mathrm{Var}\left[\sum_{j=1}^{D_{\ell-1}} w^\ell_{ij}  \alpha^{\ell-1}_j  p^{\ell-1}_j\right].
\end{align}

Because $w$ and $\alpha$ are independent, they are also uncorrelated and their variances sum. Also, using the fact that $\mathrm{Var}[aX]=a^2\mathrm{Var}[X]$ for a constant $a$,
\begin{align}
    & \mathrm{Var}\left[\sum_{j=1}^{D_{\ell-1}} w^\ell_{ij}  \alpha^{\ell-1}_j  p^{\ell-1}_j\right] = \sum_{j=1}^{D_{\ell-1}} \mathrm{Var}\left[w^\ell_{ij}  \alpha^{\ell-1}_j  p^{\ell-1}_j\right] = \sum_{j=1}^{D_{\ell-1}} \mathrm{Var}\left[w^\ell_{ij}  \alpha^{\ell-1}_j \right]  \left(p^{\ell-1}_j\right)^2
    \label{eq:26}
\end{align}

Finally, because the mean of the independent variables involved is zero by assumption, it holds that $\mathrm{Var}[w^\ell_{ij}  \alpha^{\ell-1}_j] = \mathrm{Var}[w^\ell_{ij}]  \mathrm{Var}[\alpha^{\ell-1}_j]$. We can also abstract from the indexes, since the weight variables are i.i.d.\ and from that it follows that $\mathrm{Var}[\alpha^\ell_i]=\mathrm{Var}[\alpha^\ell_j]\ \forall i,j,\ \  i,j \in \{1,\dots,D_\ell\}$, obtaining\footnote{It is worth noting that, in standard MLPs, $p_j^\ell=1$ so we recover the derivation of \citet{glorot_understanding_2010}, since $\sum_{j=1}^{D_{\ell-1}} \left(p^{\ell-1}_j\right)^2$ would be equal to $D_{\ell-1}$. In \citet{glorot_understanding_2010} our $D_{\ell-1}$ is denoted as ``$n_{i'}$" (see Equation 5).}
\begin{align}
    & \mathrm{Var}[\alpha^\ell] \approx \mathrm{Var}[w^\ell] \mathrm{Var}[\alpha^{\ell-1}] \sum_{j=1}^{D_{\ell-1}} \left(p^{\ell-1}_j\right)^2,
\end{align}
noting again that the previous equation does not depend on $i$. We want to impose $\mathrm{Var}[\alpha^\ell] \approx \mathrm{Var}[\alpha^{\ell-1}]$, which can be achieved whenever 
\begin{align}
    & \mathrm{Var}[w^\ell] \approx \frac{1}{\sum_{j=1}^{D_{\ell-1}} \left(p^{\ell-1}_j\right)^2}.
            \label{eq:var-forward}
\end{align}

\paragraph{Condition on the gradients}
From a backpropagation perspective, a similar desideratum would be to ensure that  $\mathrm{Var}\left[\frac{\partial\mathcal{L}(\alpha_i^\ell)}{\partial \alpha_i^\ell}\right] = \mathrm{Var}\left[\frac{\partial\mathcal{L}(\alpha_{i}^{\ell+1})}{\partial \alpha_{i}^{\ell+1}}\right]$\footnote{Note that what we impose is different from \citet{glorot_understanding_2010}, where the specific position $i$ is irrelevant. Here, we are asking that the variance of the gradients for neurons in the same position $i$, but at different layers, stays constant. Alternatively, we could impose an equivalence for all $i \neq i'$.}. \\ Using Equation \ref{eq:chain-rule-grad}, and considering as in \citet{glorot_understanding_2010} that at initialization we are in a linear regime where $\sigma'(x)\approx 1$,
\begin{align}
    & \mathrm{Var}\left[\frac{\partial\mathcal{L}(\alpha_i^\ell)}{\partial \alpha_i^\ell}\right] = \mathrm{Var}\left[ \sum_{j=1}^{D_{\ell+1}} \frac{\partial \mathcal{L}(\alpha_j^{\ell+1})}{\partial \alpha_j^{\ell+1}}  \frac{\partial \alpha_j^{\ell+1}(\alpha_{i}^{\ell})}{\partial \alpha_{i}^{\ell}} \right] = \mathrm{Var}\left[ \sum_{j=1}^{D_{\ell+1}} \frac{\partial \mathcal{L}(\alpha_j^{\ell+1})}{\partial \alpha_j^{\ell+1}}  \frac{\partial \alpha_j^{\ell+1}(\alpha_{i}^{\ell})}{\partial \alpha_{i}^{\ell}} \right] \\ 
    & = \mathrm{Var}\left[ \sum_{j=1}^{D_{\ell+1}} \frac{\partial \mathcal{L}(\alpha_j^{\ell+1})}{\partial \alpha_j^{\ell+1}}  \underbrace{\sigma'\left(\sum_{k=1}^{D_{\ell}} w^{\ell+1}_{jk}  \alpha^{\ell}_k  p^{\ell}_k \right)}_{\approx 1}  w^{\ell+1}_{ji}  p^{\ell}_i \right].
    \label{approx-derivative-relu}
\end{align}
Using the same arguments as above one can write
\begin{align}
   \mathrm{Var}\left[\frac{\partial\mathcal{L}(\alpha_i^\ell)}{\partial \alpha_i^\ell}\right] \approx  \mathrm{Var}[w^{\ell+1}] \sum_{j=1}^{D_{\ell+1}} \mathrm{Var}\left[\frac{\partial \mathcal{L}(\alpha_j^{\ell+1})}{\partial \alpha_j^{\ell+1}}  p^{\ell}_i\right] = \mathrm{Var}[w^{\ell+1}]  (p^{\ell}_i)^2  \sum_{j=1}^{D_{\ell+1}} \mathrm{Var}\left[\frac{\partial \mathcal{L}(\alpha_j^{\ell+1})}{\partial \alpha_j^{\ell+1}}\right].
\end{align}
Expanding, we get 
\begin{align}
    & \mathrm{Var}[w^{\ell+1}]  (p^{\ell}_i)^2  \sum_{j=1}^{D_{\ell+1}} \mathrm{Var}\left[\frac{\partial \mathcal{L}(\alpha_j^{\ell+1})}{\partial \alpha_j^{\ell+1}}\right] \\
    &=  (p^{\ell}_i)^2  \left(\prod_{i=\ell+1}^{\ell+2}\mathrm{Var}[w^{\ell+1}]\right)  \sum_{j=1}^{D_{\ell+1}} (p^{\ell+1}_j)^2 \underbrace{\left(\sum_{j'=1}^{D_{\ell+2}} \mathrm{Var}\left[\frac{\partial \mathcal{L}(\alpha_{j'}^{\ell+2})}{\partial \alpha_{j'}^{\ell+2}}\right]\right)}_{\text{constant w.r.t.\ } j} \\ 
    & = (p^{\ell}_i)^2  \left(\prod_{i=\ell+1}^{\ell+2}\mathrm{Var}[w^{\ell+1}]\right)  \left(\sum_{j'=1}^{D_{\ell+2}} \mathrm{Var}\left[\frac{\partial \mathcal{L}(\alpha_{j'}^{\ell+2})}{\partial \alpha_{j'}^{\ell+2}}\right]\right)  \left(\sum_{j=1}^{D_{\ell+1}} (p^{\ell+1}_j)^2\right) 
\end{align}
Therefore, we can recursively expand these terms and obtain
\begin{align}
    \mathrm{Var}\left[\frac{\partial\mathcal{L}(\alpha_i^\ell)}{\partial \alpha_i^\ell}\right] = (p^{\ell}_i)^2  \left(\prod_{k=\ell+1}^{L}\mathrm{Var}[w^{k}]\right)  \left(\sum_{j'=1}^{D_{L}} \mathrm{Var}\left[\frac{\partial \mathcal{L}(\alpha_{j'}^{D_{L}})}{\partial \alpha_{j'}^{D_{L}}}\right]\right)  \left(\prod_{k=\ell+1}^{L-1}\sum_{j_k=1}^{D_{k}} (p^{k}_{j_k})^2\right).
\end{align}
In this case, the variance depends on $i$ just for the term $(p^{\ell}_i)^2$. Finally, by imposing
\begin{align}
    \mathrm{Var}\left[\frac{\partial\mathcal{L}(\alpha_i^\ell)}{\partial \alpha_i^\ell}\right] = \mathrm{Var}\left[\frac{\partial \mathcal{L}(\alpha_{i}^{\ell+1})}{\partial \alpha_{i}^{\ell+1}}\right]
\end{align}
and simplifying common terms we obtain
\begin{align}
    & (p^{\ell}_i)^2  \mathrm{Var}[w^{\ell+1}]  \left(\sum_{j=1}^{D_{\ell+1}} (p^{\ell+1}_j)^2\right) = (p^{\ell+1}_{i})^2 \\
    & \mathrm{Var}[w^{\ell+1}] = \frac{(p^{\ell+1}_{i})^2}{(p^{\ell}_i)^2\sum_{j=1}^{D_{\ell+1}} \left(p^{\ell+1}_j\right)^2}.
    \label{eq:var-backprop}
\end{align}
Both Equations \ref{eq:var-forward} and \ref{eq:var-backprop} are verified when we initialize the distributions $f_{\ell}$ in the same way for all layers, which implies $(p^{\ell}_i)^2=(p^{\ell+1}_i)^2$. Note that without this last requirement, Equation \ref{eq:var-backprop} would violate the i.i.d.\ assumption of the weights.

To show a similar initialization for ReLU activations, we need the following lemma.
\begin{lemma} 
  Consider the ReLU activation $y=\max(0,x)$ and a symmetric distribution $p(x)$ around zero. Then $\mathbb{E}[y^2]=\frac{1}{2}Var[x]$.
  \label{lemma:relu-variance}
\end{lemma}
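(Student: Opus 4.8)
The plan is to exploit the symmetry of $p(x)$ about zero, which collapses the second moment of the rectified variable into a simple fraction of the second moment of $x$. The crucial observation is that squaring the ReLU discards the negative branch: $\max(0,x)^2$ equals $x^2$ for $x>0$ and $0$ for $x\le 0$, so the expectation reduces to an integral over the positive half-line,
\begin{align}
\mathbb{E}[y^2] = \int_{-\infty}^{\infty} \max(0,x)^2\, p(x)\, dx = \int_{0}^{\infty} x^2\, p(x)\, dx.
\end{align}

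First I would note that symmetry of $p$ about zero forces $\mathbb{E}[x]=0$, so that $\mathrm{Var}[x]=\mathbb{E}[x^2]$; it therefore suffices to relate $\mathbb{E}[y^2]$ to the full second moment $\mathbb{E}[x^2]$. Next I would split $\mathbb{E}[x^2]$ at the origin and use that $x^2 p(x)$ is an even function (since $p(-x)=p(x)$) to equate the two half-line contributions,
\begin{align}
\mathbb{E}[x^2] = \int_{-\infty}^{0} x^2\, p(x)\, dx + \int_{0}^{\infty} x^2\, p(x)\, dx = 2\int_{0}^{\infty} x^2\, p(x)\, dx.
\end{align}
Combining with the first display gives $\mathbb{E}[y^2] = \tfrac{1}{2}\mathbb{E}[x^2] = \tfrac{1}{2}\mathrm{Var}[x]$, as claimed.

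The proof has no genuine obstacle: it is a one-line symmetry argument once the rectification is accounted for. The only points deserving a moment's care are that the equality of the positive and negative half-line integrals requires $p$ to be \emph{exactly} symmetric about zero (mean-zero alone would not suffice), and that $\mathbb{E}[x^2]$ be finite so the integrals are well defined; both hold under the stated hypotheses, so the conclusion is immediate. This lemma then feeds directly into the ReLU branch of Theorem~3.1, supplying the extra factor of $\tfrac{1}{2}$ that distinguishes the rescaled Kaiming variance $\tfrac{2}{\sum_j (p_j^\ell)^2}$ from the $\sigma'(0)\approx 1$ case.
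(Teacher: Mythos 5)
Your proposal is correct and follows essentially the same argument as the paper's proof: restrict the integral of $\max(0,x)^2$ to the positive half-line, use the symmetry of $p$ to identify it with half of $\mathbb{E}[x^2]$, and conclude via $\mathbb{E}[x]=0$ that this equals $\tfrac{1}{2}\mathrm{Var}[x]$. Your added remarks on exact symmetry being needed and finiteness of the second moment are minor clarifications, not a different route.
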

\begin{proof}
    \begin{align}
        \mathbb{E}[y^2] = \int_{-\infty}^{{+\infty}} \max(0,x)^2 p(x) dx = \int_{0}^{{+\infty}} x^2 p(x) dx = \frac{1}{2}\int_{-\infty}^{{+\infty}} x^2 p(x) dx
    \end{align}
    Because $p(x)$ is symmetric, $\mathbb{E}[x]=0$. Then
    \begin{align}
        \frac{1}{2}\int_{-\infty}^{{+\infty}} x^2 p(x) dx = \frac{1}{2}\int_{-\infty}^{{+\infty}} (x - \mathbb{E}[x])^2 p(x) dx = \frac{1}{2}\mathrm{Var}[x].
    \end{align}
\end{proof}

Using Lemma \ref{lemma:relu-variance}, we can rewrite Equation \ref{eq:variance-approxim-tanh} as
\begin{align}
    & \mathrm{Var}[\alpha^\ell_i] = \mathrm{Var}\left[\sigma\left(\sum_{j=1}^{D_{\ell-1}} w^\ell_{ij}  \alpha^{\ell-1}_j  p^{\ell-1}_j\right)\right]
    = \frac{1}{2} \mathrm{Var}\left[\sum_{j=1}^{D_{\ell-1}} w^\ell_{ij}  \alpha^{\ell-1}_j  p^{\ell-1}_j\right] \\
    & = \frac{1}{2} \mathrm{Var}[w^\ell] \mathrm{Var}[\alpha^{\ell-1}] \sum_{j=1}^{D_{\ell-1}} \left(p^{\ell-1}_j\right)^2
\end{align}
where no approximation has been made. Similar to prior results, it follows that 
\begin{align}
    & \mathrm{Var}[w^\ell] = \frac{2}{\sum_{j=1}^{D_{\ell-1}} \left(p^{\ell-1}_j\right)^2}.
            \label{eq:var-forward-relu}
\end{align}
From a backpropagation perspective, since $\sigma'(x) \not\approx 1$, we use similar arguments as in \citet{he_delving_2015}: we assume that the pre-activations $\alpha$ have zero mean, then the derivative of the ReLU can have values zero or one with equal probability. One can show, by expanding the definition of variance, that $\mathrm{Var}[\sigma'\left(\sum_{k=1}^{D_{\ell}} w^{\ell+1}_{jk}  \alpha^{\ell}_k  p^{\ell}_k \right) w^{\ell+1}_{ji}]=\frac{1}{2}\mathrm{Var}[ w^{\ell+1}_{ji}]$.
As a result, one can re-write Equation \ref{eq:variance-approxim-tanh} as
\begin{align}
        & \mathrm{Var}\left[ \sum_{j=1}^{D_{\ell+1}} \frac{\partial \mathcal{L}(\alpha_j^{\ell+1})}{\partial \alpha_j^{\ell+1}}  \sigma'\left(\sum_{k=1}^{D_{\ell}} w^{\ell+1}_{jk}  \alpha^{\ell}_k  p^{\ell}_k \right) w^{\ell+1}_{ji}  p^{\ell}_i \right] \\
        & \approx \frac{1}{2}  \mathrm{Var}[w^{\ell+1}]  (p^{\ell}_i)^2 \sum_{j=1}^{D_{\ell+1}} \mathrm{Var}\left[\frac{\partial \mathcal{L}(\alpha_j^{\ell+1})}{\partial \alpha_j^{\ell+1}}\right], 
\end{align}
where we have made the assumption of independence between $\sigma'\left(\sum_{k=1}^{D_{\ell}} w^{\ell+1}_{jk}  \alpha^{\ell}_k  p^{\ell}_k \right)$ and $ w^{\ell+1}_{ji}$ since the former term is likely a constant. Following an identical derivation as above, we obtain
\begin{align}
    & \mathrm{Var}[w^{\ell+1}] = 
    \frac{2(p^{\ell+1}_i)^2}{(p^{\ell}_i)^2\sum_{j=1}^{D_{\ell+1}} \left(p^{\ell+1}_j\right)^2}
    \label{eq:var-backprop-relu}
\end{align}
which is almost identical to Equation \ref{eq:var-forward-relu} and not dependent on $i$ when we initialize $f_{\ell}$ in the same way for all layers.
\end{proof}

\paragraph{Comparison of convergence for different initializations}
Below, we provide a comparison of convergence between an \method{} ReLU MLP initialized  with the standard Kaiming scheme and one initialized according to our theoretical results (``Kaiming+").
\begin{figure*}[h]
\centering
\includegraphics[width=\textwidth]{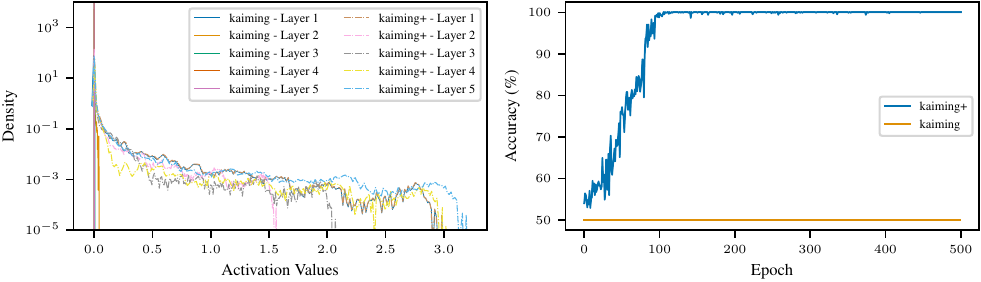} 
\caption{(Left) Effect of the rescaled initialization scheme (``Kaiming+'') on a ReLU-based MLP, where neurons' activations are computed using Equation \ref{eq:unit-rescale}. Compared to the standard initialization, the variance of activations agrees with the theoretical result. (Right) Without the rescaled initialization, convergence is hard to attain on the SpiralHard dataset (please refer to the next section for details about the dataset).}
\label{fig:new-init-combined}
\end{figure*}

\section{Dataset Info and Statistics}
\label{sec:dataset-statistics}
The synthetic datasets DoubleMoon, Spiral, and SpiralHard are shown in Figure \ref{fig:toy-dataset}. These binary classification datasets have been specifically created to analyze the behavior of \method{} in a controlled scenario where we are sure that the difficulty of the task increases. We provide the code to generate these datasets in the supplementary material.
\begin{figure*}[ht]
    \centering
    \includegraphics[width=0.8\textwidth]{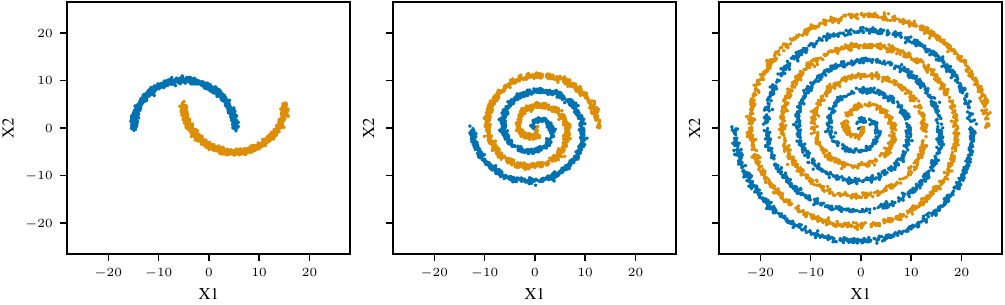}
    \caption{The DoubleMoon, Spiral, and SpiralHard synthetic datasets are used to test \method{}'s inner workings, and are ordered difficulty. Orange and blue colors denote different classes.}
    \label{fig:toy-dataset}
\end{figure*}

Table \ref{tab:data-stats} reports information on all datasets' statistics and the data splits created to carry out model selection (inner split) and risk assessment (outer split), respectively.   
\begin{table}[ht]
\small
\setlength{\tabcolsep}{4pt} %
\renewcommand{\arraystretch}{1.1} %
\caption{Dataset statistics and number of samples in each split are shown.}
\label{tab:data-stats}
\centering
\begin{tabular}{lccccc}
\toprule
           & \# Samples & \# Features & \# Classes & Outer Split (TR/VL/TE) & Inner Split (TR/VL) \\ 
           \midrule
DoubleMoon & 5000       & 2           & 2          & 3600/400/1000           & 3600/400            \\
Spiral     & 5000       & 2           & 2          & 3600/400/1000           & 3600/400            \\
SpiralHard & 10000       & 2           & 2          & 7200/800/2000          & 7200/800            \\
pol        & 15000      & 48          & 2          & 10800/1200/3000        & 10800/1200            \\
MiniBooNE  & 130064     & 50          & 2          & 93645/10406/26013          & 93645/10406            \\
credit card clients & 30000       & 23           & 2          & 21600/2400/6000          & 21600/2400            \\
MNIST      & 70000      & 28x28       & 10         & 48610/9723/11667       & 48610/9723          \\
CIFAR10    & 60000      & 32x32x3     & 10         & 41665/8334/10001       & 41665/8334          \\
CIFAR100   & 60000      & 32x32x3     & 100        & 41665/8334/10001       & 41665/8334          \\
NCI1       & 4110       & 37          & 2          & 3328/370/412           & 3328/370            \\
REDDIT-B   & 2000       & 1           & 2          & 1620/180/200           & 1620/180            \\
Multi30k   & 31000      & 128x50257   & Multilabel & 29046/968/1000        & 29046/968          \\ \bottomrule
\end{tabular}
\end{table}
Note that for Multi30k, the maximum sequence length is 128 and the number of possible tokens is 50527. The classification loss is computed by comparing the predicted token against the expected one, ignoring any token beyond the length of the actual target. For a detailed description of the graph datasets, the reader can refer to \citet{errica_fair_2020}.

\section{Hyper-parameters for fixed and \method{} models}
\label{sec:hyper-parameters}
For each data domain, we discuss the set of hyper-parameters tried during model selection for the fixed baseline and for \method{}. We try our best to keep architectural choices identical with the exception of the width of the MLPs used by each model. In all experiments, we either run patience-based early stopping \citep{prechelt_early_1998} or simply select the epoch with best validation score.

\subsection{Synthetic Tabular Datasets and Permuted MNIST}
We report the hyper-parameter configurations tried for the fixed MLP/RNN and \method{} in Table \ref{tab:hyperparams-tabular}. In particular, the starting width of \method{} under the quantile function evaluated at $k=0.9$ is approximately $256$ hidden neurons. We also noticed that a learning rate value of $0.1$ makes learning the width unstable, so we did not use it in our experiments. The total number of configurations for the fixed models is 180, for \method{} is 18. We also did not impose any prior on an expected size to see if \method{} recovers a similar width compared to the best fixed model (please refer to Table \ref{tab:results}) starting from the largest configuration among the fixed models.

\begin{table}[ht]
    \scriptsize
    \centering
    \caption{Hyper-parameter configurations for standard MLP/RNN and \method{} versions on tabular datasets.}
    \label{tab:hyperparams-tabular}
    \begin{tabular}{lccc}
        \toprule
        Hyper-Parameter & \textbf{DoubleMoon} & \textbf{Spiral} & \textbf{SpiralHard/pol/MiniBooNE/credit c.} \\
        \midrule
        Batch Size & 32 & $[32, 128]$ (MLP/RNN), 128 (\method{}) & $[32, 128]$ (MLP/RNN), 128 (\method{}) \\
        Epochs & 500 & 1000 & 5000 \\
        Hidden Layers & 1 & 1 & $[1, 2, 4]$ \\
        Layer Width & \multicolumn{3}{c}{$[8, 16, 24, 128, 256]$} \\
        Non-linearity & \multicolumn{3}{c}{[ReLU, LeakyReLU, ReLU6]} \\
        Optimizer & \multicolumn{3}{c}{Adam, learning rate $\in [0.1, 0.01]$ (MLP/RNN), $0.01$ (\method{})} \\
        \midrule
        \textbf{\method{} Specific} \\
        \midrule
        Quantile Fun. Threshold $k$ & \multicolumn{3}{c}{$0.9$} \\
        Exponential Distributions Rate & \multicolumn{3}{c}{$0.01$} \\
        $\sigma_\ell^\theta$ & \multicolumn{3}{c}{$[1.0, 10.0]$} \\
        Prior over $\lambda$ & \multicolumn{3}{c}{Uninformative} \\
        \bottomrule
    \end{tabular}
\end{table}

\subsection{Image Classification Datasets}
We train from scratch a ResNet20 and focus our architectural choices on the MLP that performs classification using the flattened representation provided by the previous CNN layers. This ensures that any change is performance is only due to the changes we apply to the MLP. The configurations are shown in Table \ref{tab:hyperparams-image}; note that a width of 0 implies a linear classifier instead of a 1 hidden layer MLP. After previous results on tabular datasets showed that a LeakyReLU performs very well, we fixed it in the rest of the experiments.
\begin{table}[ht]
    \small
    \centering
    \caption{Hyper-parameter configurations for standard MLP and \method{} versions on image classification datasets.}
    \label{tab:hyperparams-image}
    \begin{tabular}{lccc}
        \toprule
        Hyper-Parameter & \multicolumn{3}{c}{\textbf{MNIST}\ /\ \textbf{CIFAR10}\ /\ \textbf{CIFAR100}} \\
        \midrule
        Batch Size & \multicolumn{3}{c}{128} \\
        Epochs & \multicolumn{3}{c}{200} \\
        Layer Width (classification layer) & \multicolumn{3}{c}{$[0, 32, 128, 256, 512]$} \\
        Optimizer & \multicolumn{3}{c}{SGD, learning rate $0.1$, weight decay $0.0001$, momentum $0.9$} \\
        Scheduler & \multicolumn{3}{c}{Multiply learning rate by $0.1$ at epochs $50, 100, 150$} \\
        \midrule
        \textbf{\method{} Specific} \\
        \midrule
        Quantile Fun. Threshold $k$ & \multicolumn{3}{c}{$0.9$} \\
        Exponential Distributions Rate & \multicolumn{3}{c}{$0.02$} \\
        $\sigma_\ell^\theta$ & \multicolumn{3}{c}{$1.0$} \\
        Prior over $\lambda$ & \multicolumn{3}{c}{Uninformative} \\
        \bottomrule
    \end{tabular}
\end{table}
In this case, we test 5 different configurations of the fixed baseline, whereas we do not have to perform any model selection for \method{}. The rate of the exponential distribution has been chosen to have a starting width of approximately $128$ neurons, which is the median value among the ones tried for the fixed model.

\subsection{Text Translation Tasks}
Due to the cost of training a Transformer architecture from scratch on a medium size dataset like Multi30k, we use most hyper-parameters' configurations of the base Transformer in \citet{vaswani_attention_2017}, with the difference that \method{} is applied to the MLPs in each encoder and decoder layer. We train an architecture with 6 encoder and 6 decoder layers for 500 epochs, with a patience of 250 epochs applied to the validation loss. We use an Adam optimizer with learning rate 0.01, weight decay 5e-4, and epsilon 5e-9. We also introduce a scheduler that reduces the learning rate by a factor 0.9 when a plateau in the loss is reached. The number of attention heads is set to 8 and the dropout to 0.1. We try different widths for the MLPs, namely [128,256,512,1024,2048]. The embedding size is fixed to 512 and the batch size is 128. The \method{} version starts with an exponential distribution rate of 0.004, corresponding to approximately 512 neurons, and a $\sigma^\alpha_\ell$=1.

\subsection{Graph Classification Tasks}
We train the \method{} version of Graph Isomorphism Network, following the setup of \citet{errica_fair_2020} and reusing the published results. We test 12 configurations as shown in the table below. Please note that an extra MLP with 1 hidden layer is always placed before the sequence of graph convolutional layers.
\begin{table}[ht]
    \small
    \centering
    \caption{Hyper-parameter configurations for \method{} versions on graph classification datasets.}
    \label{tab:hyperparams-graphs}
    \begin{tabular}{lcc}
        \toprule
        \textbf{Hyper-Parameter} & \multicolumn{2}{c}{\textbf{NCI1}\ /\ \textbf{REDDIT-B}} \\
        \midrule
        Batch Size & \multicolumn{2}{c}{[32,128]} \\
        Epochs & \multicolumn{2}{c}{1000} \\
        Graph Conv. Layers & \multicolumn{2}{c}{$[1, 2, 4]$} \\
        Global Pooling & \multicolumn{2}{c}{[sum, mean]} \\
        Optimizer & \multicolumn{2}{c}{Adam, learning rate $0.01$} \\
        \midrule
        \textbf{\method{} Specific} \\
        \midrule
        Patience & \multicolumn{2}{c}{500 epochs} \\
        Quantile Fun. Threshold $k$ & \multicolumn{2}{c}{$0.9$} \\
        Exponential Distributions Rate & \multicolumn{2}{c}{$0.02$} \\
        $\sigma_\ell^\theta$ & \multicolumn{2}{c}{$10.0$} \\
        Prior over $\lambda$ & \multicolumn{2}{c}{Uninformative} \\
        \bottomrule
    \end{tabular}
\end{table}

\section{Re-training a fixed network with the learned width}
\label{sec:retraining}
While on DoubleMoon \method{} learns a similar number of neurons chosen by the model selection procedure and the convergence to the perfect solution over 10 runs seem much more stable, this is not the case on Spiral, SpiralHard and REDDIT-B, where \method{} learns a higher total width over layers. To investigate whether the width learned by \method{} on these datasets is indeed the reason behind the better performances (either a more stable training or better accuracy), we run again the experiments on the baseline networks, this time by fixing the width of each layer to the average width over hidden layers learned by \method{} and reported in Table \ref{tab:results}. We call this experiment ``Fixed+''.

\begin{table}[ht]
\small
\centering
\label{tab:exp-fixed-learned-neurons}
\begin{tabular}{llll}
\toprule
       & Spiral     & SpiralHard & REDDIT-B  \\ \midrule
Fixed  & 99.5(0.5) & 98.0(2.0)  & 87.0(4.4) \\
\method{}    & 99.8(0.1)  & 100.0(0.0) & 90.2(1.3) \\
Fixed+ & \textcolor{rwth-green!80}{100.0(0.1)} & \textcolor{rwth-red!80}{83.2(17.0)}  & \textcolor{rwth-green!80}{90.7(1.4)} \\ \bottomrule
\end{tabular}
\end{table}

The results suggest that by using more neurons than those selected by grid search -- possibly because the validation results were identical between different configurations -- it is actually possible to improve both accuracy and stability of results on Spiral and REDDIT-B, while we had a convergence issue on SpiralHard. Notice that the number of neurons learned by \method{} on REDDIT-B was well outside of the range investigated by \citet{xu_how_2019} and \citet{errica_fair_2020}, demonstrating that the practitioner's bias can play an important role on the final performance.

Because we are not aware of the best width achieved by the grid search approach of \citet{errica_fair_2020} on REDDIT-B, we ran an additional experiment on fixed models, testing smaller and larger widths compared to the one selected by \method{} (average of $\sim$793). What we discover is the following.
\begin{table}[h]
\centering
\scriptsize
\caption{We report REDDIT-B \textit{validation} performances, averaged across the 10 outer folds, for a fixed-width DGN. The other hyper-parameters match those selected by \method{} during model selection, as per this specific experiment.}
\begin{tabular}{lcccc}
\toprule
\# Total Width & Mean & Std \\
\midrule
50  & 89.28 & 3.50 \\
250  & 92.89 & 1.45 \\
500 & 92.67 & 2.03 \\
793 & 92.83 & 3.55 \\
1000 & 93.06 & 1.60 \\
1250 & 92.78 & 2.33 \\
1500 & 91.78 & 2.41 \\
\bottomrule
\end{tabular}
\end{table}

Though these results are statistically comparable, the average performance obtained using \method{}'s learned width seems to be close to a local optimum for the fixed-width model.

\clearpage
\section{Further qualitative results on image classification datasets}
\label{sec:task-difficulty-images}
Because it is well known that MNIST, CIFAR10, and CIFAR100 are datasets of increasing difficulty, we attach a figure depicting the learned number of neurons and the convergence of \method{} on these datasets, following the practices highlighted in \citet{he_deep_2016} and described in Table \ref{tab:hyperparams-image} to bring the ResNet20 to convergence. 
Akin to the tabular datasets, we see a clear pattern where the learned number of neurons, averaged over the 10 final training runs, increases together with the task difficulty. The \method{} version of the ResNet20 converges in the same amount of epochs as the fixed one.
\begin{figure}[ht]
    \centering
    \includegraphics[width=\textwidth]{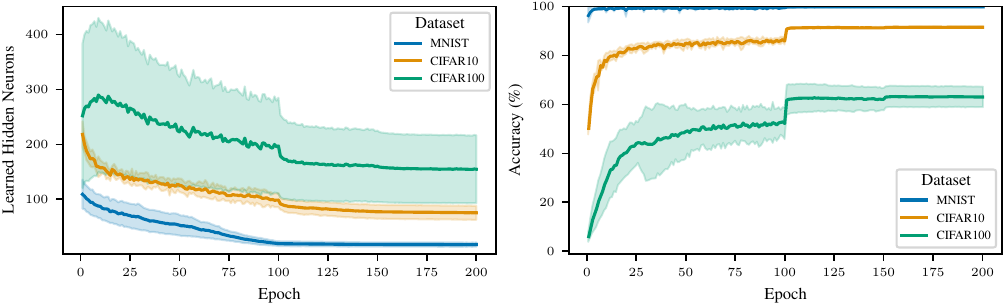}
    \caption{We present an analysis similar to Figure \ref{fig:task-difficulty-toy} but for image classification datasets of increasing difficulty. One can observe the jumps caused by the learning rate's scheduling strategy.}
    \label{fig:task-difficulty-images}
\end{figure}

\section{Comparison with Neural Architecture Search (NAS)}
\label{sec:nas-comparison}

Although complementary to our approach, we thought instructive to run a some basic NAS methods on the image classification datasets. 
Most NAS methods fix an upper bound on the maximum width, whereas \method{}’s methodology does not. The truncation ability of \method{} is not intended as a substitute for pruning but rather as a confirmation of our claims. In fact, NAS and pruning can actually be combined with \method{} and do not necessarily be seen as “competing” strategies.

The results are shown in Table \ref{tab:nas-cv}. That said, we compared the performance of \method{} against simple NAS methods on the vision datasets. We tested grid search, random search (Following \citet{wu_firefly_2020}), local search \citep{white_local_2021}, and Bayesian Optimization \citep{barber_bayesian_2012}, fixing the same budget of 5 for all methods and width values between 0 (Linear classifier) and 512 neurons. Results are comparable to those of \method{}.

\begin{table}[ht]
\centering
\caption{Comparison between different NAS approaches and \method{}. We remind the reader that NAS is an orthogonal and complementary research direction to \method{}.}
\label{tab:nas-cv}
\small
\begin{tabular}{lccccc}
\toprule
         & Grid Search & Random Search & Local Search & Bayes. Optim. & \method{} \\ \midrule
 & & & & \\ \midrule
MNIST    & 99.6(0.1)  & 99.6(0.0)     & 99.5    &  99.4 &  99.7(0.0)      \\
CIFAR10  & 91.4(0.2)  & 90.1(0.5)     & 90.6    &  91.2 &  91.4(0.2)      \\
CIFAR100 & 66.5(0.4)  & 64.9(1.1)     & 64.9    &  65.9 &  63.1(4.0)      \\ \bottomrule
\end{tabular}
\end{table}

\clearpage
\section{Non-linearity ablation study}
\label{sec:ablation-detail}
Below, we report an ablation study on the impact of non-linear activation functions. We analyze the convergence to the same width across different batch sizes and exponential starting rates $\lambda$ on the SpiralHard dataset. 

\begin{figure*}[ht]
    \centering
    \begin{subfigure}[t]{1\textwidth}
        {\includegraphics[width=\textwidth]{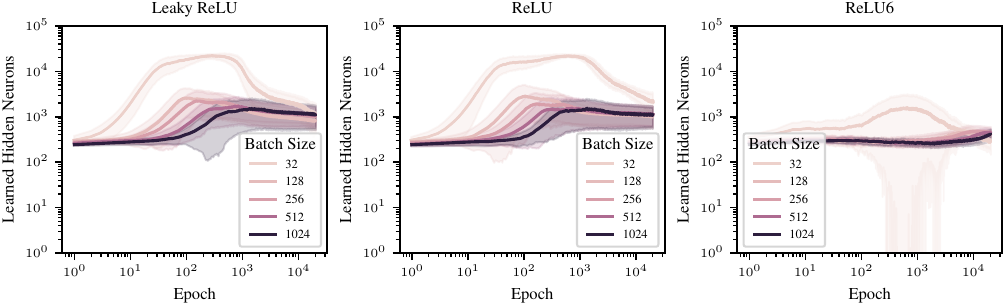}} 
    \end{subfigure}
    \\
    \begin{subfigure}[t]{1\textwidth}
        {\includegraphics[width=\textwidth]{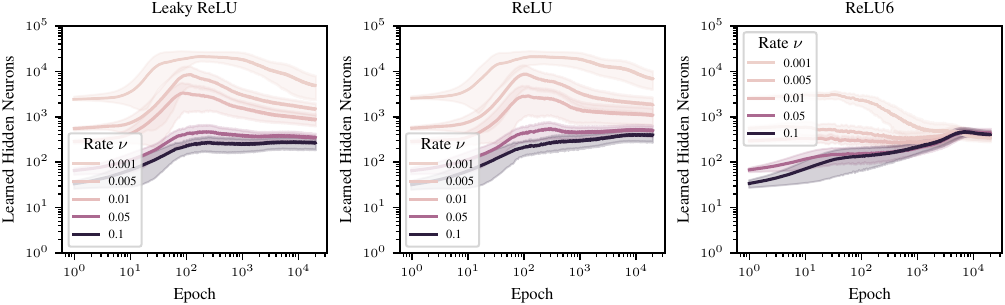}} 
    \end{subfigure}
    \\
    \begin{subfigure}[t]{1\textwidth}
        {\includegraphics[width=\textwidth]{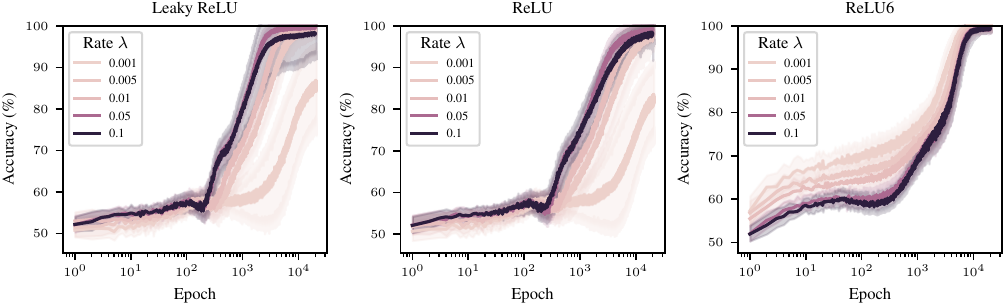}} 
    \end{subfigure}
    \caption{Impact of different batch sizes and starting exponential rates $\lambda$, organized by type of non-linear activation function.}
    \label{fig:ablation-detail}
\end{figure*}

These qualitative analyses support our claims that using a bounded activation is one way to encourage the network not to counterbalance the learned rescaling of each neuron. In fact, using a ReLU6 shows a distinctive convergence to the same amount of neurons among all configurations tried. The decreasing trend for LeakyReLU and ReLU activations may suggest that these configurations are also converging to a similar value than ReLU6, but they are taking a much longer time.

\clearpage
\section{Learned width for each layer on Graph Datasets}
\label{sec:learned-neurons-graph}
\begin{figure*}[ht]
    \centering
    \begin{subfigure}[t]{1\textwidth}
        {\includegraphics[width=\textwidth]{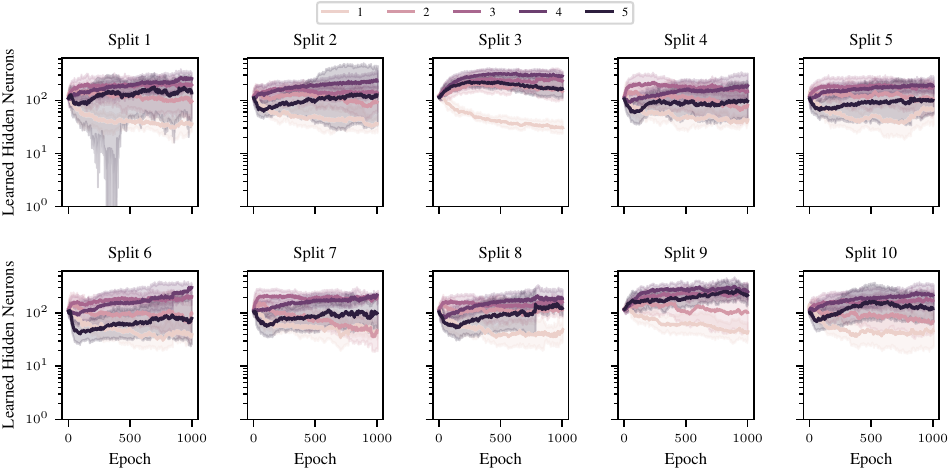}} 
    \end{subfigure}
    \begin{subfigure}[t]{1\textwidth}
        {\includegraphics[width=\textwidth]{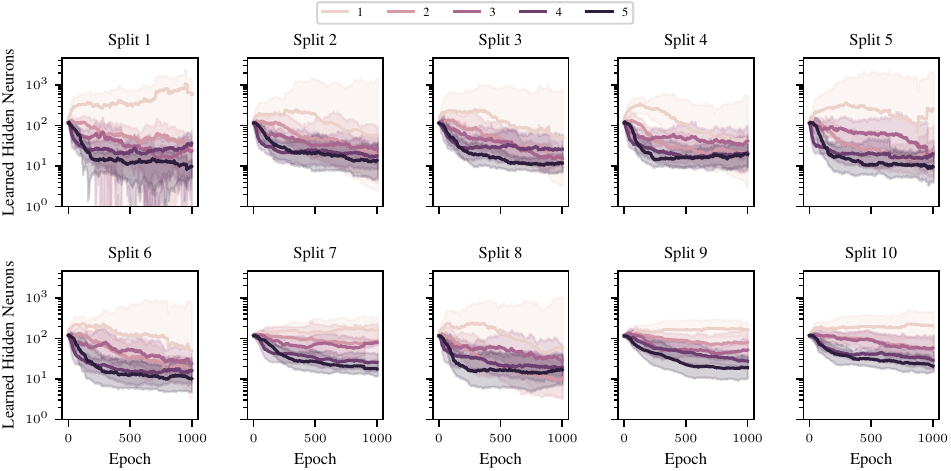}} 
    \end{subfigure}
    \caption{Learned neurons per layer (from 1 to 5), averaged over 10 final runs for each of the 10 best configurations selected in the outer folds. The first and second rows refer to NCI1, the third and fourth refer to REDDIT-B. One can observe similar trends in most cases.}
    \label{fig:learned-neurons-graph}
\end{figure*}

\clearpage
\section{Quantile Ablation Study}
\label{sec:quantile-ablation}

\begin{figure*}[ht]
    \centering
    \begin{subfigure}[t]{0.9\textwidth}
        {\includegraphics[width=\textwidth]{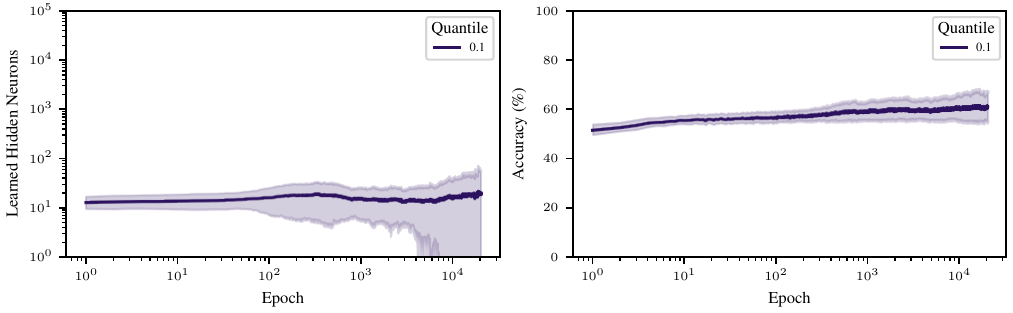}} 
    \end{subfigure}
    \begin{subfigure}[t]{0.9\textwidth}
    {\includegraphics[width=\textwidth]{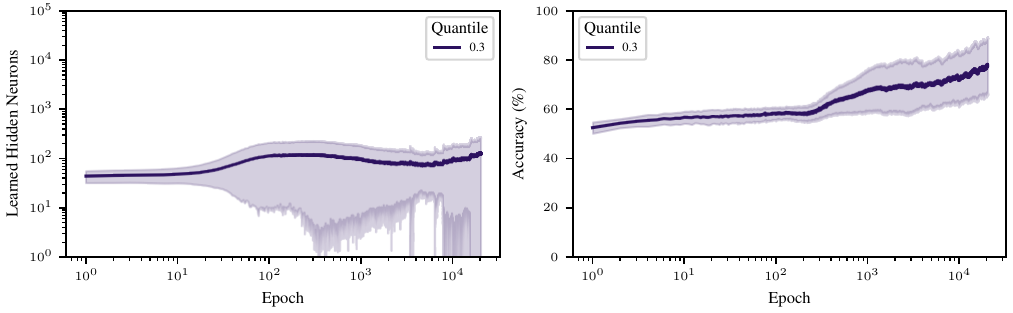}} 
    \end{subfigure}
    \begin{subfigure}[t]{0.9\textwidth}
    {\includegraphics[width=\textwidth]{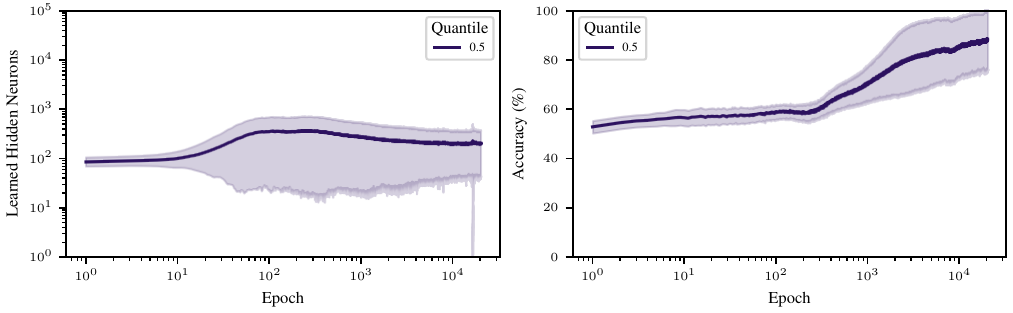}} 
    \end{subfigure}
    \begin{subfigure}[t]{0.9\textwidth}
    {\includegraphics[width=\textwidth]{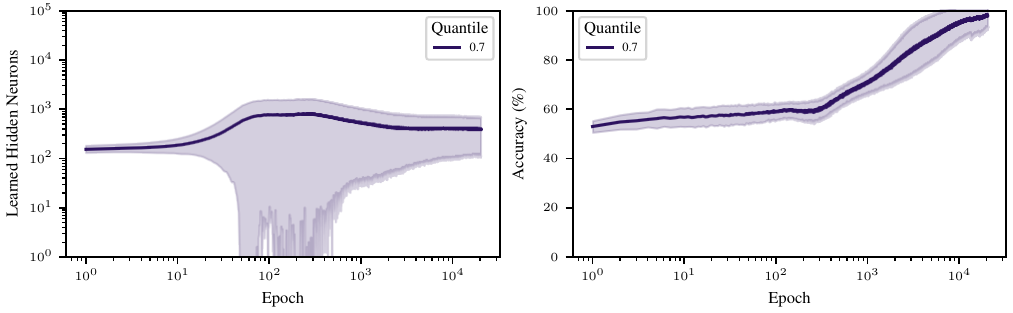}} 
    \end{subfigure}
    \begin{subfigure}[t]{0.9\textwidth}
    {\includegraphics[width=\textwidth]{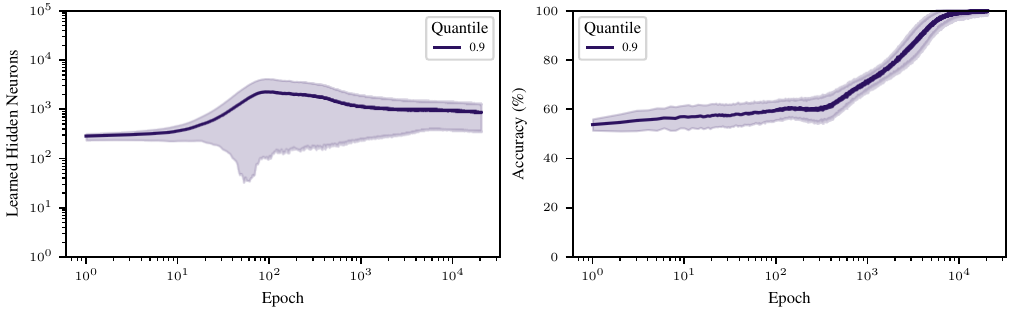}} 
    \end{subfigure}
    \caption{We report learned width and validation accuracy trends averaged over 1-layer \method{} configurations on SpiralHard. We see how a higher quantile $k$ (i.e., a better ELBO approximation) grants more stable performances. All architectures keep adapting to the task.}
    \label{fig:quantile-ablation}
\end{figure*}

\clearpage
\section{Regularization Ablation Study}
\label{sec:regularization-ablation}

In Section \ref{sec:method}, we argued there are two ways we can maximize the importance rescaling effect, namely by using bounded activation functions and by regularizing the weights. In this section, we show with an ablation that these strategies are generally not required, in the sense that \method{} is still able to solve the tasks, although it might introduce a more neurons than needed. The table below compares our original results on tabular data (Table \ref{tab:results}) with those where we did not consider ReLU6 and where we imposed an almost null regularizer on the weights ($\sigma_\ell^\theta=100$).

\begin{table}[ht]
    \centering
    \caption{We compare performances and learned width of \method{}, where we removed regularization and bounded activations from hyper-parameters, with the results in the main paper.}
    \label{tab:placeholder}
    \begin{tabular}{l rr rr rr rr} \toprule
               & \multicolumn{2}{c}{\method{}} & \multicolumn{2}{c}{\method{} - NoReg} & \multicolumn{2}{c}{\makecell{Width \\ (\method{})}} & \multicolumn{2}{c}{\makecell{Width \\ (\method{} - NoReg)}} \\ \cmidrule(lr){2-3} \cmidrule(lr){4-5} \cmidrule(lr){6-7} \cmidrule(lr){8-9}
               & \makecell{Mean} & \makecell{(Std)} & \makecell{Mean} & \makecell{(Std)} & \makecell{Mean} & \makecell{(Std)} & \makecell{Mean} & \makecell{(Std)} \\ \midrule
    DoubleMoon & 100.0 & (0.0)    & 100.0 & (0.2)    & 8 & (2.8) & 17 & (16.6)  \\
    Spiral     & 99.8  & (0.1)    & 100.0  & (0.0)   & 66 & (8.7) & 108 & (26.8)  \\
    SpiralHard & 100.0 & (0.0)    & 100.0 & (0.0)    & 227 & (32.4) & 539 & (809.8) \\
    pol        & 99.2 & (0.1)    & 99.2 & (0.1)   & 84 & (11.0) & 2335 & (552.8) \\
    MiniBooNE  & 93.2 & (0.1)    & 93.8 & (0.2)   & 53 & (11.1) & 4907 & (1141) \\
    credit card & 81.8 & (0.1)    & 81.8 & (0.1)  & 51 & (12.0) & 53 & (13) \\
    \bottomrule
    \end{tabular}
\end{table}

We see that in all cases, not using regularization nor bounded activations leads to comparable performances but definitely higher widths. For some of the configurations tried, the width tends to grow a lot, which is why we set a maximum width of 5000 in the interest of time, although performances do not seem to decrease in spite of that. Further investigation into the results revealed that the weight regularization is what helps the most in controlling a stable behavior of the width, while the bounded activation did not have a clear impact on that. We therefore recommend some degree of regularization to make sure that the effect of the importance rescaling is properly taken into account by the optimization process.

\clearpage
\section{Wall-Clock Time Comparison}
\label{sec:time-comparison}
To further highlight the advantages brought by \method{}, we report the average runtime (in seconds) required to complete a single model selection configuration for \method{} and for the fixed model. In cases where model selection is not performed (e.g., \method{} on vision tasks), we instead report the average runtime of a final run. Results for NCI1 and REDDIT are excluded from this comparison, since they were taken directly from the literature. Based on these runtimes, we compute two measures: \textit{i)} the ratio of average single-run times (Single-Run Ratio), and \textit{ii)} the ratio of total time required by \method{} and the fixed models to complete the entire model selection process (Model Selection Ratio).

\begin{table}[h]
\centering
\small
\caption{We report average wall-clock times for \method{} and the fixed model, measured either over model selection configurations or final runs when no selection is performed. We also provide two ratios: \textit{i)} single-run wall-clock times and \textit{ii)} total model selection time (fixed models always test at least five widths).}
\label{tab:wall-time}
\begin{tabular}{lrrrr}
\toprule
Dataset     & Avg Fixed (s) & Avg \method{} (s) & Single-Run Ratio ($\downarrow$) & Model Selection Ratio ($\downarrow$) \\
\midrule
DoubleMoon  &   70  &  218 & 3.1$\times$ & \textbf{0.62}$\times$ \\
Spiral      &  120  &  141 & 1.1$\times$ & \textbf{0.24}$\times$ \\
SpiralHard  & 1982  & 7772 & 3.9$\times$ & \textbf{0.78}$\times$ \\
pol         & 1251  & 1088 & \textbf{0.87}$\times$ & \textbf{0.17}$\times$ \\
MiniBooNE   & 11794 & 4496 & \textbf{0.38}$\times$ & \textbf{0.07}$\times$ \\
creditcards & 2104  & 1815 & \textbf{0.90}$\times$ & \textbf{0.14}$\times$ \\
PMNIST      &  842  & 2014 & 2.4$\times$ & \textbf{0.48}$\times$ \\
MNIST       & 1838  & 3000 & 1.6$\times$ & \textbf{0.33}$\times$ \\
CIFAR10     & 1444  & 2880 & 2$\times$ & \textbf{0.40}$\times$ \\
CIFAR100    & 1474  & 1740 & 1.2$\times$ & \textbf{0.24}$\times$ \\
Multi30k    & 45100 & 19800 & \textbf{0.42}$\times$ & \textbf{0.09}$\times$ \\
\bottomrule
\end{tabular}
\end{table}

The results confirm that \method{}'s overhead of adapting the network vanishes for larger datasets with higher feature dimensionality. Interestingly, on some of these datasets the average single-run wall time is even shorter than the fixed models. By being able to reduce hyper-parameter tuning of the width while preserving most performances, the total model selection time ratios are consistently in favor of \method{}, and in principle scale favorably with the number of widths tested. Note that memory consumption scales with the width of the layers, but our methodology does not introduce any noticeable extra memory requirements.

\clearpage
\section{Impact of Removing Most Important Neurons}
\label{sec:most-important-removal}

We complement the truncation analysis of Figure \ref{fig:spiral-truncation} by showing what happens when we remove individual neurons or a group of them, starting from the most important ones. This analysis provides further insights about how neurons behave in \method{}.

\begin{figure*}[ht]
    \centering
    \begin{subfigure}[t]{1\textwidth}
        {\includegraphics[width=\textwidth]{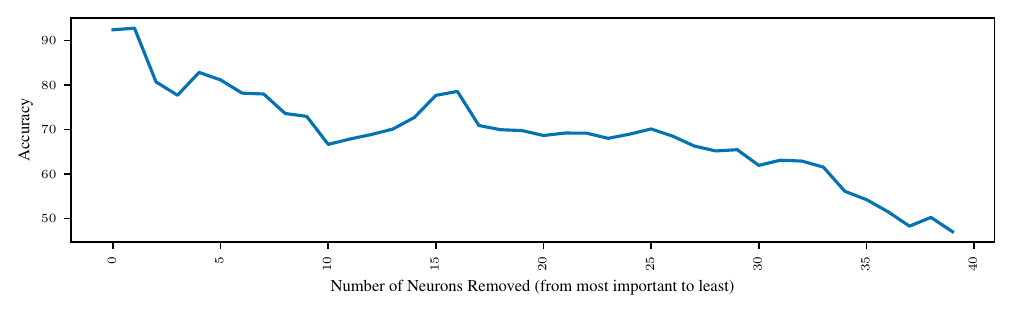}} 
    \end{subfigure}
    \\
    \begin{subfigure}[t]{1\textwidth}
        {\includegraphics[width=\textwidth]{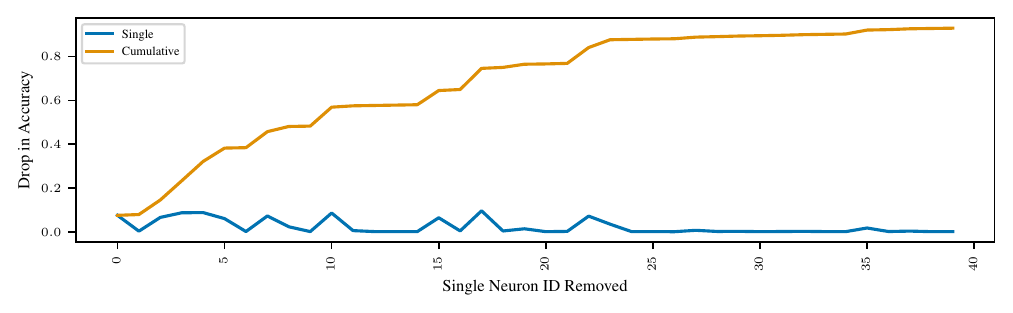}} 
    \end{subfigure}
    \caption{We show the impact of removing the most important neurons, either in batch (top) or individually (bottom) on Spiral performances, using the same experiment as Figure \ref{fig:spiral-truncation}.}
    \label{fig:most-important-removal}
\end{figure*}

We observe that performance quickly drops, as expected, and we reach random performance after removing the first $\sim$40 neurons out of 83. The second analysis prunes individual neurons to understand how much their absence influences performance -- which can have a different effect compared to removal of a set of neurons -- to see if there exist two neighboring neurons that contribute similarly to the performance drop or do not contribute at all. Removing most of the first 25 neurons out of 83 contributes majorly to performance degradation, but interestingly the removal of some neurons does not affect performances. Instead, removing any subsequent neuron has little impact (but for one) on performances, as one would expect. This suggests that we may be able to compress the neural network even more: we leave this investigation to future work.

\end{document}